\documentclass{article}
\usepackage[preprint]{log_2026}			

\usepackage{booktabs}						
\usepackage{multirow}						
\usepackage{amsfonts}						
\usepackage{graphicx}						
\usepackage{duckuments}						
\usepackage{adjustbox}
\usepackage{float}
\usepackage{array}
\usepackage{siunitx}
\usepackage{tabularx}
\usepackage{amsthm}

\newtheorem{theorem}{Theorem}[subsection]
\newtheorem{corollary}{Corollary}[subsection]
\newtheorem{lemma}{Lemma}[subsection]
\newtheorem{definition}{Definition}
\newcolumntype{M}{S[table-format=2.3]}
\newcolumntype{D}{>{\scriptsize}S[table-format=1.3]}

\usepackage[numbers,compress,sort]{natbib}	

\title[HOPPER: Learnable Hop Extraction for Linearized Graph Sequence Models]{HOPPER: Learnable Hop Extraction for Linearized Graph Sequence Models}

\author[Herath et al.]{%
Isuru Herath \thanks{Equal contribution.}\\
Carnegie Mellon University \\
\email{iherath@andrew.cmu.edu}\And
Arin Gopakumar \footnotemark[1]\\
UC Berkeley\\
\email{aringopakumar@berkeley.edu}\And
Sharan Sahu \footnotemark[1] \\
Cornell University\\
\email{ss4329@cornell.edu}
}

\begin{document}
\raggedbottom

\maketitle

\begin{abstract}
Graph neural networks typically propagate information through repeated message-passing layers, coupling the distance over which information travels with the number of nonlinear transformations applied. This coupling can make deep architectures difficult to optimize and can lead to over-smoothing, over-squashing, and the loss of long-range information. Linearized Graph Sequence Models (LGSMs) address this issue by separating information depth from processing depth and treating the successive propagation states of each node as a sequence. However, existing LGSMs construct these sequences using fixed graph operators, limiting their ability to adapt propagation to the input graph, node features, and downstream task. We introduce \textsc{HOPPER}, an end-to-end learnable extension of LGSM that learns how hop sequences should be extracted before they are processed by a modern state-space model. Our framework supports feature-conditioned, structure-aware, graph- and hop-adaptive propagation mechanisms while preserving permutation equivariance. Standard adjacency-based and non-backtracking LGSM sequences arise as special cases of our proposed extractor family. We show that \textsc{HOPPER} is state-of-the-art or competitive across the \textsc{ECHO-Synth} benchmark, and that varying the maximum neighborhood size of message backtracking cancellation (i.e. structural memory window) can optimize accuracy on the \textsc{LRIM} physics-based long-range dependency benchmark. These results demonstrate that learnable sequence extraction provides a flexible and effective approach to long-range graph representation learning.
\end{abstract}

\section{Introduction}
\label{sec:intro}


Message-passing neural networks (MPNNs) are a standard architecture for graph learning \citep{gilmer2017neural,kipf2017semi}. By repeatedly aggregating local neighborhoods, they expand their receptive fields, but long-range propagation requires many layers, coupling two distinct computational dimensions: \emph{information depth}, how far information travels through the graph, and \emph{processing depth}, how many nonlinear transformations are applied. As depth increases, fixed-dimensional representations must compress increasingly large neighborhoods, leading to over-squashing, representation collapse, and optimization difficulties \citep{alon2021bottleneck,topping2022understanding,digiovanni2023power}, particularly for long-range graph tasks \citep{dwivedi2022longrange}. Meanwhile, modern sequence models, including Transformers \citep{vaswani2017attention} and selective state-space models such as Mamba and Mamba2 \citep{gu2024mamba,dao2024transformers}, efficiently model long-range dependencies. Applying them directly to graphs is challenging because graphs lack a canonical node ordering; existing graph sequence models therefore use node prioritization, neighborhood tokenization, random walks, or learned permutations \citep{behrouz2024graph,wang2024graphmamba}, introducing an additional graph serialization problem.

Linearized Graph Sequence Models (LGSMs) provide an alternative \citep{mathys2026lgsm}. Rather than treating the nodes of a graph as a sequence, LGSM treats the successive propagation messages from each node as a sequence over information depth. A state-space model then processes this sequence independently for each node, explicitly decoupling propagation depth from nonlinear processing depth. However, existing LGSMs construct these sequences using fixed operators, such as normalized adjacency propagation or non-backtracking recurrences. Although these sequences depend on graph topology, their propagation rules are fixed across graphs, features, tasks, and information depths. Since useful neighborhood ranges and propagation scales can vary across learning problems \citep{xu2018representation,abu2019mixhop,chien2021adaptive,zhao2021adaptive}, a fixed extractor may provide a suboptimal sequence that the downstream state-space model cannot recover from.

We introduce \textsc{HOPPER}, an end-to-end learnable extension of LGSM that learns how hop sequences are extracted. \textsc{HOPPER} preserves the separation of information depth and processing depth, while introducing feature-conditioned attention, a hypernetwork that learns graph- and hop-adaptive combinations of structural graph operators, finite structural memory windows \citep{vaswani2017attention,abu2019mixhop,chien2021adaptive,ha2017hypernetworks}. The resulting sequence is processed by a stack of Mamba2-based SSM blocks \citep{dao2024transformers}. Because sequence processing occurs along information depth rather than an ordering of nodes, \textsc{HOPPER} preserves permutation equivariance without graph serialization. Standard adjacency-based and non-backtracking LGSM sequences are recovered as special cases, retaining their inductive biases while enabling adaptive propagation. \textsc{HOPPER} achieves the best performance on eccentricity and single-source shortest-path prediction in \textsc{ECHO-Synth} while remaining competitive on diameter, while on LRIM-16, $M=8$ yields the best performance among the evaluated memory windows.

\textbf{Related work.} Long-range graph learning has been approached through deeper and rewired message-passing architectures, graph transformers, and multi-hop graph filters \citep{rong2020dropedge,zhao2020pairnorm,klicpera2019diffusion,karhadkar2023fosr,kreuzer2021rethinking,ying2021graphormer,rampasek2022gps,shirzad2023exphormer}. Recent graph state-space models construct sequences through neighborhood tokenization, random walks, node prioritization, or learned permutations \citep{behrouz2024graph,wang2024graphmamba,tonshoff2023graph,kim2024graph}. HOPPER instead builds on LGSM and combines adaptive graph filtering, feature-conditioned propagation, hypernetworks, and permutation-invariant graph summarization \citep{velickovic2018graph,ha2017hypernetworks,zaheer2017deepsets,lee2019settransformer}. We survey the broader literature on long-range graph learning in Appendix~\ref{appendix:related_work}.

\section{Background}

\subsection{Notation}
\label{subsec:notation}

Let $\mathcal{G}=(\mathcal{V},\mathcal{E})$ be an undirected graph with $n=|\mathcal{V}|$ nodes and $m=|\mathcal{E}|$ edges, and let $\deg(v)$ denote the degree of $v\in\mathcal{V}$. We write $\mathbf{A}\in\mathbb{R}^{n\times n}$ for the adjacency matrix, where $\mathbf{A}_{uv}=1$ iff $(u,v)\in\mathcal{E}$, and $\mathbf{D}\in\mathbb{R}^{n\times n}$ for the diagonal degree matrix, with $\mathbf{D}_{vv}=\deg(v)$. The symmetrically and random-walk normalized adjacency matrices are $\mathbf{A}_{\mathrm{sym}}=\mathbf{D}^{-1/2}\mathbf{A}\mathbf{D}^{-1/2}$ and $\mathbf{A}_{\mathrm{rw}}=\mathbf{D}^{-1}\mathbf{A}$, respectively. Let $\mathbf{X}\in\mathbb{R}^{n\times F_{\mathrm{in}}}$ be the input node-feature matrix, with hidden representation $\mathbf{H}=\mathbf{X}\mathbf{W}_{\mathrm{in}}\in\mathbb{R}^{n\times d}$. For sequence length $L$, a hop sequence is $\mathbf{S}=(\mathbf{S}^{(0)},\ldots,\mathbf{S}^{(L-1)})$, where $\mathbf{S}^{(l)}\in\mathbb{R}^{n\times d}$ represents node states at information depth $l$. We use $[\mathbf{A}\|\mathbf{B}]$ for feature-wise concatenation and $\mathrm{vec}(\mathbf{A})$ for row-wise vectorization. Unless otherwise specified, attention $\mathrm{softmax}$ is row-wise over the key dimension, $\mathrm{LayerNorm}$ is applied over the feature dimension, and $\mathrm{MLP}$ denotes a standard learnable multilayer perceptron.

\subsection{Graph Neural Networks}

Message-passing neural networks (MPNNs) \citep{gilmer2017neural} learn node
representations through repeated local neighborhood aggregation. Starting from
$\mathbf{H}^{(0)}=\mathbf{H}$, a generic message-passing layer updates the
representation of node $v$ as
\begin{equation}
    \mathbf{h}_v^{(t+1)}
    =
    \psi^{(t)}
    \left(
        \mathbf{h}_v^{(t)},
        \phi^{(t)}
        \left(
            \left\{
                \mathbf{h}_u^{(t)} : u\in\mathcal{N}(v)
            \right\}
        \right)
    \right),
\end{equation}
where $\phi^{(t)}$ is a permutation-invariant neighborhood aggregator and
$\psi^{(t)}$ is a learnable nonlinear transformation. For common architectures
such as GCNs \citep{kipf2017semi}, this update takes the matrix form
\begin{equation}
    \mathbf{H}^{(t+1)}
    =
    \sigma\!\left(
        \mathbf{A}_{\mathrm{sym}}
        \mathbf{H}^{(t)}
        \mathbf{W}^{(t)}
    \right),
    \label{eq:mpnn}
\end{equation}
up to the usual inclusion of self-loops, where
$\mathbf{W}^{(t)}\in\mathbb{R}^{d\times d}$ is learnable and $\sigma$ is a
point-wise nonlinearity. Crucially, each message-passing layer simultaneously propagates information
one hop farther through the graph and applies another nonlinear
transformation. Consequently, incorporating information from nodes at graph
distance $r$ requires at least $r$ message-passing layers, tightly coupling
\emph{information depth} to \emph{processing depth}. Increasing the receptive
field therefore also increases the number of sequential transformations,
which can make deep GNNs difficult to optimize and can exacerbate phenomena
such as over-smoothing and the attenuation of long-range information.

\subsection{Linearized Graph Sequence Models}
\label{sec:lgsm}

Linearized Graph Sequence Models (LGSMs) \citep{mathys2026lgsm} decouple information propagation from nonlinear processing. Rather than interleaving neighborhood aggregation
and nonlinear transformations as in message-passing GNNs, LGSMs first
construct, for each node, a sequence of representations at increasing
information depths. Given $\mathbf{H}\in\mathbb{R}^{n\times d}$, a fixed graph
propagator produces
\begin{equation}
    \mathbf{S}^{(l)}
    =
    \mathcal{P}_l(\mathbf{A},\mathbf{D})\mathbf{H},
    \qquad
    l=0,\ldots,L-1,
    \label{eq:lgsm-sequence}
\end{equation}
where $\mathcal{P}_l(\mathbf{A},\mathbf{D})\in\mathbb{R}^{n\times n}$
determines how information is propagated to depth $l$. For example,
adjacency-power propagation uses
\begin{equation}
    \mathbf{S}^{(l)}
    =
    \mathbf{A}_{\mathrm{sym}}^{\,l}\mathbf{H},
\end{equation}
so that $\mathbf{S}^{(l)}$ aggregates information from nodes within $l$ hops.
The resulting states form
$\mathbf{S}\in\mathbb{R}^{n\times L\times d}$, with the sequence associated
with node $v$ given by
\begin{equation}
    \mathbf{s}_v
    =
    \big(
        \mathbf{S}^{(0)}_{v,:},
        \ldots,
        \mathbf{S}^{(L-1)}_{v,:}
    \big)
    \in\mathbb{R}^{L\times d}.
\end{equation}

A shared sequence model
$\Phi:\mathbb{R}^{L\times d}\rightarrow\mathbb{R}^{L\times d}$ is then
applied independently to each node sequence. Thus, the sequence length $L$
controls how far information propagates through the graph, while the depth
and capacity of $\Phi$ control how that information is processed. This
separation removes the one-to-one coupling between information depth and
processing depth present in conventional message passing. A key limitation, however, is that existing LGSMs construct these sequences using fixed, hand-designed propagation rules that are shared across graphs and information depths. Consequently, the manner in which information is propagated cannot adapt to the structure or features of a particular graph, nor can it vary across hops to emphasize different propagation behaviors at different information depths. This can force the downstream sequence model to compensate for information that was poorly propagated by the fixed extractor.

\section{Methodology}
\label{sec:method}

\subsection{Learnable Hop Extraction}
\label{sec:overview}

Linearized Graph Sequence Models (LGSMs) separate graph propagation from
nonlinear sequence processing, but construct the hop sequence using a fixed
propagation rule. For example, 
\begin{align}
  \text{adjacency powers:}\quad
    & \mathbf{S}^{(l)} = \mathbf{A}_{\mathrm{sym}}^{\,l}\mathbf{H}, \label{eq:lgsm-adj}\\
  \text{non-backtracking:}\quad
    & \mathbf{B}^{(0)}=\mathbf{I},\quad \mathbf{B}^{(1)}=\mathbf{A},\quad
      \mathbf{B}^{(2)}=\mathbf{A}^2-\mathbf{D}, \nonumber\\
    & \mathbf{B}^{(l)}=\mathbf{A}\mathbf{B}^{(l-1)}-(\mathbf{D}-\mathbf{I})\mathbf{B}^{(l-2)},
      \;\; \mathbf{S}^{(l)}=\mathbf{B}^{(l)}\mathbf{H}. \label{eq:nbt}
\end{align}
Here, $\mathbf{I}\in\mathbb{R}^{n\times n}$ denotes the identity matrix. \textsc{HOPPER} retains the separation between propagation and processing,
but replaces the fixed sequence extractor with a learnable, permutation-equivariant map $\mathrm{SEQ}_{\theta}:
    (\mathbf{H},\mathbf{A},\mathbf{D})
    \longmapsto
    \mathbf{S}
    \in\mathbb{R}^{n\times L\times d}.$
\begin{equation}
    \mathbf{H}
    =
    \mathbf{X}\mathbf{W}_{\mathrm{in}}
    \;\xrightarrow{\;\mathrm{SEQ}_{\theta}(\mathbf{A},\mathbf{D})\;}\;
    \mathbf{S}\in\mathbb{R}^{n\times L\times d}
    \;\xrightarrow{\;N_{\mathrm{blk}}\times\Phi_{\omega}\;}\;
    \mathbf{Z}\in\mathbb{R}^{n\times L\times d}
    \;\xrightarrow{\;\rho\;}\;
    \hat{\mathbf{y}},
    \label{eq:pipeline}
\end{equation}
where $\mathrm{SEQ}_{\theta}$ denotes the learnable hop extractor and
$\Phi_{\omega}:\mathbb{R}^{L\times d}\rightarrow\mathbb{R}^{L\times d}$
denotes the sequence processor consisting of $N_{\mathrm{blk}}$ processing
blocks. The resulting hop sequence is processed independently for each node.
In particular, for each $v\in\mathcal{V}$,
\begin{equation}
    \mathbf{Z}_{v,:,:}
    =
    \Phi_{\omega}\!\left(\mathbf{S}_{v,:,:}\right),
    \qquad
    \mathbf{S}_{v,:,:}\in\mathbb{R}^{L\times d},
    \label{eq:sequence-processing}
\end{equation}
yielding $\mathbf{Z}\in\mathbb{R}^{n\times L\times d}$. A task-specific
readout $\rho$ then produces
$\hat{\mathbf{y}}=\rho(\mathbf{Z})$. Thus, $\Phi_{\omega}$ operates along
information depth $l$ independently for each node and requires no canonical
ordering of the graph nodes.

\begin{figure}[H]
    \centering
    \includegraphics[width=\linewidth]{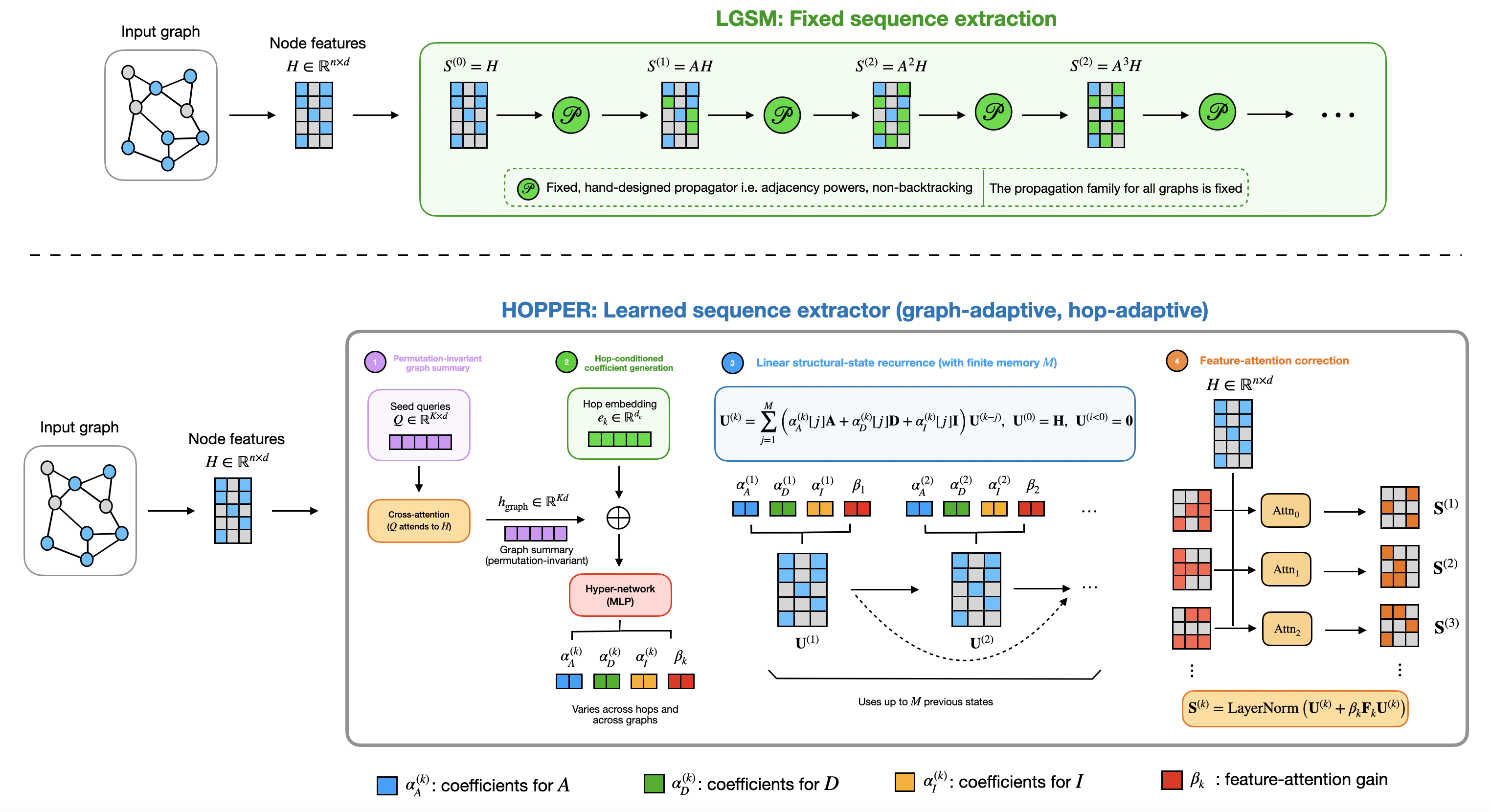}
    \caption{Standard LGSMs extract hop sequences using a fixed propagation rule shared across graphs and depths. HOPPER instead learns a graph and hop-adaptive sequence extractor through a permutation-invariant graph summary, hop-conditioned coefficients, a finite-memory linear structural recurrence, and an output-only feature-attention correction.}
    \label{fig:placeholder}
\end{figure}

\subsection{Hypernetwork-Conditioned Structural-State Recurrence}
\label{sec:hyper}
The key contributions of our work to LGSMs are in the sequence extraction step. This work is a step toward fully learnable sequence extraction, achieved through four key ideas: a linear structural state $\mathbf{U}^{(k)}$ accumulated across hops that is never normalized within the recurrence, a hypernetwork that produces hop-dependent propagation coefficients conditioned on a permutation-invariant graph summary and hop embedding, a finite memory window of size $M$, and a feature-attention term applied only to the output.

\paragraph{Permutation-invariant structural graph summary.}
To condition propagation on the input graph while remaining invariant to node
ordering and independent of graph size, we first construct a fixed-dimensional
graph summary. We augment each initial node representation with one-hop
neighborhood information:
\begin{equation}
    \mathbf{H}_{\mathrm{nbr}}
    =
    \mathbf{A}_{\mathrm{rw}}\mathbf{H},
    \qquad
    \mathbf{H}_{\mathrm{str}}
    =
    [\mathbf{H}\mid\mathbf{H}_{\mathrm{nbr}}]
    \in\mathbb{R}^{n\times 2d}.
    \label{eq:pool-in}
\end{equation}
Let
$\mathbf{Q}_{\mathrm{s}}\in\mathbb{R}^{K\times d}$ denote $K$ learnable
seed queries, and define
\begin{equation}
    \mathbf{K}_{\mathrm{s}}
    =
    \mathbf{H}_{\mathrm{str}}\mathbf{W}_{K}^{\mathrm{s}},
    \qquad
    \mathbf{V}_{\mathrm{s}}
    =
    \mathbf{H}_{\mathrm{str}}\mathbf{W}_{V}^{\mathrm{s}},
    \label{eq:pool-kv}
\end{equation}
where
$\mathbf{W}_{K}^{\mathrm{s}},\mathbf{W}_{V}^{\mathrm{s}}
\in\mathbb{R}^{2d\times d}$.
The seed queries cross-attend to the node representations:
\begin{equation}
    \mathbf{H}_{\mathrm{lat}}
    =
    \mathrm{softmax}\!\left(
        \frac{
            \mathbf{Q}_{\mathrm{s}}
            \mathbf{K}_{\mathrm{s}}^{\top}
        }{\sqrt{d}}
    \right)
    \mathbf{V}_{\mathrm{s}}
    \in\mathbb{R}^{K\times d}.
    \label{eq:pool-attention}
\end{equation}
Finally,
\begin{equation}
    \mathbf{h}_{\mathrm{graph}}
    =
    \mathrm{vec}(\mathbf{H}_{\mathrm{lat}})
    \in\mathbb{R}^{Kd}.
    \label{eq:pool-out}
\end{equation}
Since the attention aggregates over the node dimension, permuting the nodes
does not change $\mathbf{h}_{\mathrm{graph}}$. Moreover, its dimension $Kd$ does not depend on $n$.

\paragraph{Hop-conditioned coefficient generation.}
A learned step embedding $\mathbf{e}_k$ is concatenated with $\mathbf{h}_{\mathrm{graph}}$, and an MLP emits $3M+1$ scalars:
\begin{equation}
  [\boldsymbol{\alpha}_A^{(k)},\boldsymbol{\alpha}_D^{(k)},
  \boldsymbol{\alpha}_I^{(k)},\beta_k]
  =\mathrm{MLP}([\mathbf{h}_{\mathrm{graph}}\,\|\,\mathbf{e}_k])
  \in\mathbb{R}^{3M+1}.
  \label{eq:hyper-coeffs}
\end{equation}
Conditioning on the graph summary allows the propagation rule to vary across graphs, while the hop embedding allows it to vary with information depth.

\paragraph{Linear structural-state recurrence.}
The linear structural state is updated using a window of the $M$ previous structural states:
\begin{equation}
  \mathbf{U}^{(k)} = \sum_{j=1}^{M}
    \left(\alpha_A^{(k)}[j]\mathbf{A}+\alpha_D^{(k)}[j]\mathbf{D}
       +\alpha_I^{(k)}[j]\mathbf{I}\right)\mathbf{U}^{(k-j)},
  \;\;
  \mathbf{U}^{(0)}=\mathbf{H},\;\; \mathbf{U}^{(i<0)}=\mathbf{0}.
  \label{eq:shadow}
\end{equation}
At each hop, $\mathbf{U}^{(k)}$ remains a linear function of $\mathbf{H}$. Keeping normalization, nonlinear activations, and attention outside the recurrence preserves the polynomial structure required to recover the fixed LGSM extractors. Applying $\mathbf{A}$ and $\mathbf{D}$ directly reproduces this algebra exactly, but the resulting operators grow with node degree and become numerically unstable over long hop sequences, which we discuss in Appendix~\ref{app:normalized-path}.

\paragraph{Feature-attention correction.}
At each hop, an attention matrix computed from the original node features provides a gated correction to the structural state:
\begin{equation}
\mathbf{F}_k=\mathrm{softmax}\left(\frac{\mathbf{Q}_k\mathbf{K}_k^\top}{\sqrt{d}}\right),
\;\;
\mathbf{S}^{(k)}=\mathrm{LayerNorm}\left(\mathbf{U}^{(k)}+\beta_k\mathbf{F}_k\mathbf{U}^{(k)}\right),
\label{eq}
\end{equation}
where $\mathbf{Q}_k=\mathbf{H}\mathbf{W}_Q^{(k)}$ and $\mathbf{K}_k=\mathbf{H}\mathbf{W}_K^{(k)}$. The attention correction modifies only the extracted state $\mathbf{S}^{(l)}$. Consequently, feature attention can enrich the extracted sequence without introducing nonlinearities into the
structural recurrence in Eq.~\eqref{eq:shadow}.

\section{Theoretical Analysis}
\label{sec:theory}

We now characterize the theoretical properties of the learned sequence
extractor. Our analysis establishes four main results. First, \textsc{HOPPER}
remains permutation equivariant despite conditioning its propagation rule on
the input graph. Second, its structural states admit graph-adaptive
finite-degree polynomial representations that retain finite-hop locality and
contain standard LGSM propagators as special cases. Third, this richer
polynomial family can achieve optimal long-range sensitivity under a common
spectral stability constraint. Finally, it can preserve nonstationary spectral
information across information depth that vanishes under repeated adjacency
propagation. We defer all proofs to
Appendix~\ref{sec:theoretical-properties}.

\subsection{Permutation equivariance}

A graph model should not depend on the arbitrary ordering assigned to its
nodes. For \textsc{HOPPER}, this property is not immediate because the
propagation coefficients themselves are generated from a learned summary of
the input graph. We therefore first establish that graph-conditioned sequence
extraction preserves the required permutation symmetry.

\vspace{0.5em}

\begin{theorem}[Permutation equivariance of the sequence extractor]
\label{thm-main-body:permutation-equivariance}
Let $\mathbf{P}\in\mathbb{R}^{n\times n}$ be any permutation matrix, and define $\mathbf{A}'=\mathbf{P}\mathbf{A}\mathbf{P}^{\top}$ and $\mathbf{H}'=\mathbf{P}\mathbf{H}$. Suppose the feature-attention maps are permutation equivariant, so that $\mathbf{Q}'_k=\mathbf{P}\mathbf{Q}_k$ and $\mathbf{K}'_k=\mathbf{P}\mathbf{K}_k$, and that $\mathrm{LayerNorm}$ is applied independently to each node with shared parameters. Then, for every $k\geq 0$, $\mathbf{U}'^{(k)}=\mathbf{P}\mathbf{U}^{(k)}$ and $\mathbf{S}'^{(k)}=\mathbf{P}\mathbf{S}^{(k)}$. Consequently,
\[
\mathrm{SEQ}_{\theta}(\mathbf{P}\mathbf{H},\mathbf{P}\mathbf{A}\mathbf{P}^{\top})
=
\mathbf{P}\,\mathrm{SEQ}_{\theta}(\mathbf{H},\mathbf{A}),
\]
where $\mathbf{P}$ acts on the node dimension of the hop sequence.
\end{theorem}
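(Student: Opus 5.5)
The plan is to follow the pipeline in order: first show that every graph-level quantity is invariant under the permutation, then that every node-level quantity is equivariant. Induction on the hop index $k$ then handles the recurrence.

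\textbf{Step 1: the basic operators.} Since $\mathbf{D}=\mathrm{diag}(\mathbf{A}\mathbf{1})$ and $\mathbf{P}^{\top}\mathbf{1}=\mathbf{1}$, we get $\mathbf{D}'=\mathrm{diag}(\mathbf{P}\mathbf{A}\mathbf{1})=\mathbf{P}\mathbf{D}\mathbf{P}^{\top}$. Hence $\mathbf{A}_{\mathrm{rw}}'=\mathbf{P}\mathbf{A}_{\mathrm{rw}}\mathbf{P}^{\top}$, and trivially $\mathbf{I}=\mathbf{P}\mathbf{I}\mathbf{P}^{\top}$. It follows that $\mathbf{H}_{\mathrm{nbr}}'=\mathbf{P}\mathbf{H}_{\mathrm{nbr}}$ and $\mathbf{H}_{\mathrm{str}}'=\mathbf{P}\mathbf{H}_{\mathrm{str}}$, because feature-wise concatenation commutes with left multiplication by $\mathbf{P}$.

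\textbf{Step 2: invariance of the graph summary and coefficients.} From Step 1, $\mathbf{K}_{\mathrm{s}}'=\mathbf{P}\mathbf{K}_{\mathrm{s}}$ and $\mathbf{V}_{\mathrm{s}}'=\mathbf{P}\mathbf{V}_{\mathrm{s}}$. Then $\mathbf{Q}_{\mathrm{s}}\mathbf{K}_{\mathrm{s}}'^{\top}=\mathbf{Q}_{\mathrm{s}}\mathbf{K}_{\mathrm{s}}^{\top}\mathbf{P}^{\top}$, which only permutes columns. A row-wise softmax commutes with column permutations, so the attention weights become $\mathrm{softmax}(\cdot)\mathbf{P}^{\top}$. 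Multiplying by $\mathbf{P}\mathbf{V}_{\mathrm{s}}$ cancels via $\mathbf{P}^{\top}\mathbf{P}=\mathbf{I}$, giving $\mathbf{H}_{\mathrm{lat}}'=\mathbf{H}_{\mathrm{lat}}$ and hence $\mathbf{h}_{\mathrm{graph}}'=\mathbf{h}_{\mathrm{graph}}$. The hop embeddings $\mathbf{e}_k$ do not depend on the graph at all. Therefore every coefficient $\alpha_A^{(k)}[j],\alpha_D^{(k)}[j],\alpha_I^{(k)}[j],\beta_k$ is unchanged.

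\textbf{Step 3: induction for $\mathbf{U}^{(k)}$.} This is the step that uses the invariance of the coefficients, which is the one nonobvious point. The base cases are $\mathbf{U}'^{(0)}=\mathbf{P}\mathbf{H}$ and $\mathbf{U}'^{(i)}=\mathbf{0}=\mathbf{P}\mathbf{0}$ for $i<0$ (strong induction over the window of size $M$). For $k\ge1$, define $\mathbf{T}_j=\alpha_A^{(k)}[j]\mathbf{A}+\alpha_D^{(k)}[j]\mathbf{D}+\alpha_I^{(k)}[j]\mathbf{I}$. The coefficients are shared across the two graphs, so Step 1 gives $\mathbf{T}_j'=\mathbf{P}\mathbf{T}_j\mathbf{P}^{\top}$. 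Using the induction hypothesis,
\[
\mathbf{U}'^{(k)}=\sum_{j=1}^{M}\mathbf{P}\mathbf{T}_j\mathbf{P}^{\top}\mathbf{P}\mathbf{U}^{(k-j)}=\mathbf{P}\mathbf{U}^{(k)}.
\]
Had the coefficients depended on node order, this cancellation would fail.

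\textbf{Step 4: the output states $\mathbf{S}^{(k)}$.} By hypothesis $\mathbf{Q}_k'=\mathbf{P}\mathbf{Q}_k$ and $\mathbf{K}_k'=\mathbf{P}\mathbf{K}_k$, so the logits transform as $\mathbf{P}(\mathbf{Q}_k\mathbf{K}_k^{\top})\mathbf{P}^{\top}$. A row-wise softmax is equivariant under simultaneous row and column permutation, so $\mathbf{F}_k'=\mathbf{P}\mathbf{F}_k\mathbf{P}^{\top}$. Consequently $\mathbf{U}'^{(k)}+\beta_k\mathbf{F}_k'\mathbf{U}'^{(k)}=\mathbf{P}(\mathbf{U}^{(k)}+\beta_k\mathbf{F}_k\mathbf{U}^{(k)})$. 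LayerNorm acts row-wise with shared parameters, so it commutes with row permutations, which yields $\mathbf{S}'^{(k)}=\mathbf{P}\mathbf{S}^{(k)}$. Stacking over $k=0,\dots,L-1$ gives the claimed identity for $\mathrm{SEQ}_\theta$.
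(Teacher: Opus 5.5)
Your proposal is correct and follows essentially the same route as the paper. The paper proves invariance of $\mathbf{h}_{\mathrm{graph}}$ as a separate lemma, which corresponds to your Steps 1--2, using column-permutation equivariance of the row-wise softmax and $\mathbf{P}^{\top}\mathbf{P}=\mathbf{I}$; it then deduces shared coefficients, inducts on $k$ for $\mathbf{U}^{(k)}$, and finishes with $\mathbf{F}'_k=\mathbf{P}\mathbf{F}_k\mathbf{P}^{\top}$ and row-wise LayerNorm, while your explicit derivation of $\mathbf{D}'=\mathbf{P}\mathbf{D}\mathbf{P}^{\top}$ and of the $i<0$ base cases only fills in details the paper asserts without comment.
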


Theorem~\ref{thm-main-body:permutation-equivariance} shows that
graph-conditioned propagation does not introduce a dependence on the
arbitrary labeling of the nodes. In particular, the permutation-invariant
graph summary ensures that relabeling the input graph leaves the
hypernetwork-generated propagation coefficients unchanged. The structural
recurrence and feature-attention correction then transform equivariantly, so relabeling the graph simply applies the same permutation to every node
representation in the extracted sequence. Since the downstream sequence
model is shared across nodes, a permutation-invariant readout further ensures that graph-level predictions are unchanged.

\subsection{Graph-adaptive polynomial structure}

Having established that the learned extractor respects graph symmetry, we next
characterize the class of propagation operators generated by its structural
recurrence. Although the coefficients of the structural recurrence depend on the input graph and node features, the recurrence has a simple algebraic form once these coefficients are fixed. The following result shows that each structural state corresponds to a finite-degree graph polynomial and therefore retains the locality of standard message-passing operators.

\vspace{0.5em}

\begin{theorem}[Structural state has an adaptive polynomial structure]
\label{thm-main-body:adaptive-polynomial-locality}
Fix a graph $\mathcal{G}=(\mathcal{V},\mathcal{E})$ and node representations $\mathbf{H}$, and let
$\{\boldsymbol{\alpha}_A^{(k)},\boldsymbol{\alpha}_D^{(k)},\boldsymbol{\alpha}_I^{(k)}\}_{k=1}^{L-1}$
denote the coefficients generated by the hypernetwork. Define $\boldsymbol{\Gamma}^{(0)}=\mathbf{I}$, $\boldsymbol{\Gamma}^{(i)}=\mathbf{0}$ for $i<0$, and, for $k\geq 1$,
\[
\boldsymbol{\Gamma}^{(k)}
=
\sum_{j=1}^{M}
\left(
\alpha_A^{(k)}[j]\mathbf{A}
+
\alpha_D^{(k)}[j]\mathbf{D}
+
\alpha_I^{(k)}[j]\mathbf{I}
\right)
\boldsymbol{\Gamma}^{(k-j)}.
\]
Then $\mathbf{U}^{(k)}=\boldsymbol{\Gamma}^{(k)}\mathbf{H}$ for every $k\geq 0$. Conditional on the generated coefficients, $\boldsymbol{\Gamma}^{(k)}$ is a noncommutative polynomial in $\mathbf{A}$ and $\mathbf{D}$ of degree at most $k$. Moreover, if $d_{\mathcal{G}}(u,v)>k$, then $[\boldsymbol{\Gamma}^{(k)}]_{uv}=0$. Consequently, the $k$-th structural state satisfies
\[
\mathbf{U}^{(k)}_u
=
\sum_{v:\,d_{\mathcal{G}}(u,v)\leq k}
[\boldsymbol{\Gamma}^{(k)}]_{uv}\mathbf{H}_v.
\]
Thus, conditional on the hypernetwork coefficients, $\mathbf{U}^{(k)}$ is linear in $\mathbf{H}$ and directly propagates information over at most $k$ graph hops. Since the coefficients themselves depend on $\mathbf{h}_{\mathrm{graph}}$, the overall map $\mathbf{H}\mapsto\mathbf{U}^{(k)}$ need not be linear.
\end{theorem}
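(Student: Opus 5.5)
The plan is strong induction on $k$, carrying three claims together: the representation $\mathbf{U}^{(k)}=\boldsymbol{\Gamma}^{(k)}\mathbf{H}$, the degree bound, and the support condition $[\boldsymbol{\Gamma}^{(k)}]_{uv}=0$ whenever $d_{\mathcal{G}}(u,v)>k$. Throughout, the coefficients $\boldsymbol{\alpha}^{(k)}$ are held fixed as numbers produced by the hypernetwork from $\mathbf{h}_{\mathrm{graph}}$. Conditional on them, the recurrence in Eq.~\eqref{eq:shadow} is a linear recursion with matrix coefficients, so no analytic argument is needed.

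\textbf{Base cases.} For $k=0$ we have $\boldsymbol{\Gamma}^{(0)}=\mathbf{I}$, which is a polynomial of degree $0$. It is supported on the diagonal, i.e.\ on pairs with $d_{\mathcal{G}}(u,v)=0$. For $i<0$ the zero matrices satisfy all three claims trivially, with the convention that the zero polynomial has degree $-\infty$.

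\textbf{Inductive step for the representation and the degree.} Assume the claims hold for every index below $k$, with $k\ge 1$. Substituting $\mathbf{U}^{(k-j)}=\boldsymbol{\Gamma}^{(k-j)}\mathbf{H}$ into Eq.~\eqref{eq:shadow} and using associativity of matrix products gives $\mathbf{U}^{(k)}=\boldsymbol{\Gamma}^{(k)}\mathbf{H}$. For the degree, let $\mathbf{C}_j=\alpha_A^{(k)}[j]\mathbf{A}+\alpha_D^{(k)}[j]\mathbf{D}+\alpha_I^{(k)}[j]\mathbf{I}$. Each $\mathbf{C}_j$ is a polynomial of degree at most $1$ in the noncommuting variables $\mathbf{A},\mathbf{D}$, and each $\boldsymbol{\Gamma}^{(k-j)}$ has degree at most $k-j\le k-1$. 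Each product therefore has degree at most $k-j+1\le k$, and sums do not raise degree. The products must be kept as words in $\mathbf{A}$ and $\mathbf{D}$ without reordering, since $\mathbf{A}$ and $\mathbf{D}$ need not commute.

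\textbf{Inductive step for locality.} This is the only step with graph content. The entry $[\mathbf{C}_j\boldsymbol{\Gamma}^{(k-j)}]_{uv}$ equals $\sum_w[\mathbf{C}_j]_{uw}[\boldsymbol{\Gamma}^{(k-j)}]_{wv}$. Since $\mathbf{D}$ and $\mathbf{I}$ are diagonal and $\mathbf{A}$ is supported on edges, $[\mathbf{C}_j]_{uw}\neq 0$ forces $d_{\mathcal{G}}(u,w)\le 1$. By hypothesis, $[\boldsymbol{\Gamma}^{(k-j)}]_{wv}\neq0$ forces $d_{\mathcal{G}}(w,v)\le k-j$. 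The triangle inequality then gives $d_{\mathcal{G}}(u,v)\le k-j+1\le k$. So if $d_{\mathcal{G}}(u,v)>k$, every term vanishes and $[\boldsymbol{\Gamma}^{(k)}]_{uv}=0$. This also covers disconnected pairs with $d_{\mathcal{G}}=\infty$.

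\textbf{Consequences.} The displayed formula for $\mathbf{U}^{(k)}_u$ follows by expanding the row $(\boldsymbol{\Gamma}^{(k)}\mathbf{H})_u=\sum_v[\boldsymbol{\Gamma}^{(k)}]_{uv}\mathbf{H}_v$ and dropping the zero entries. Linearity in $\mathbf{H}$ holds for fixed coefficients because $\boldsymbol{\Gamma}^{(k)}$ depends on $\mathbf{H}$ only through $\boldsymbol{\alpha}$. For the final nonlinearity claim it is enough to note that $\mathbf{h}_{\mathrm{graph}}$ is a softmax-attention function of $\mathbf{H}$ (Eq.~\eqref{eq:pool-attention}) passed through an MLP. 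Hence $\boldsymbol{\Gamma}^{(k)}=\boldsymbol{\Gamma}^{(k)}(\mathbf{H})$, and the composite map is generally not linear; scaling $\mathbf{H}$ changes the softmax weights, for example.

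I do not expect a genuine obstacle. The only care points are to treat the recursion with strong induction over the window $j=1,\dots,M$, and to state the degree claim for noncommutative words rather than commutative polynomials.
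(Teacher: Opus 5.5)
Your proposal is correct and follows essentially the same route as the paper: induction on $k$ through the recurrence gives $\mathbf{U}^{(k)}=\boldsymbol{\Gamma}^{(k)}\mathbf{H}$ and the degree bound $(k-j)+1\leq k$, and the locality claim comes from a support argument in which the diagonal $\mathbf{D},\mathbf{I}$ preserve support while $\mathbf{A}$ extends it by at most one hop. The paper runs the three claims as separate inductions and states the hop-extension step informally, whereas you carry them jointly and make that step explicit via $[\mathbf{C}_j]_{uw}\neq 0\Rightarrow d_{\mathcal{G}}(u,w)\le 1$ and the triangle inequality; this is only a difference in presentation.
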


Theorem~\ref{thm-main-body:adaptive-polynomial-locality} separates locality
from adaptivity. For fixed hypernetwork outputs, \textsc{HOPPER} remains a
finite-hop linear graph propagator whose support cannot extend beyond the
corresponding information depth. Across graph-feature inputs, however, the
coefficients of this polynomial can change through $\mathbf{h}_{\mathrm{graph}}$. The propagation rule can therefore adapt to
the graph, node features, and information depth without sacrificing the
locality structure of graph propagation.

\subsection{Learned sequence extraction can recover classic propagation schemes}

Theorem~\ref{thm-main-body:adaptive-polynomial-locality} raises a natural question: does learning the
propagation rule preserve the fixed extractors already used successfully by
LGSMs? The next result shows that these operators remain exact special cases
of the \textsc{HOPPER} recurrence. In particular, appropriate choices of the recurrence coefficients
recover adjacency-power and non-backtracking propagation exactly. The same
recurrence can also realize stable polynomial filters that are not available
from repeated adjacency powers alone.

\vspace{0.5em}

\begin{theorem}[Containment of fixed LGSM extractors and stable Chebyshev propagation]
\label{thm-main-body:fixed-extractor-containment}
Suppose $M\geq 2$ and the structural recurrence is instantiated with $\{\mathbf{S},\mathbf{A},\mathbf{D},\mathbf{I}\}$, where
\[
\mathbf{U}^{(k)}
=
\sum_{j=1}^{M}
\left(
\alpha_S^{(k)}[j]\mathbf{S}
+
\alpha_A^{(k)}[j]\mathbf{A}
+
\alpha_D^{(k)}[j]\mathbf{D}
+
\alpha_I^{(k)}[j]\mathbf{I}
\right)\mathbf{U}^{(k-j)},
\;\;
\mathbf{U}^{(0)}=\mathbf{H},\;
\mathbf{U}^{(i)}=\mathbf{0}\ \text{for }i<0.
\]
Then the following propagation sequences are recovered as special cases.

\begin{enumerate}
    \item[(i)] \textbf{Normalized-adjacency propagation.}
    Setting $\alpha_S^{(k)}[1]=1$ for every $k\geq 1$ and all remaining coefficients to zero gives $\mathbf{U}^{(k)}=\mathbf{S}^{k}\mathbf{H}$. In particular, taking $\mathbf{S}=\mathbf{A}_{\mathrm{sym}}$ recovers the adjacency-based LGSM sequence in Eq.~\eqref{eq:lgsm-adj}.

    \item[(ii)] \textbf{Non-backtracking propagation.}
    Setting $\alpha_A^{(1)}[1]=1$, $\alpha_A^{(2)}[1]=1$, and $\alpha_D^{(2)}[2]=-1$, and for every $k\geq 3$ setting $\alpha_A^{(k)}[1]=1$, $\alpha_D^{(k)}[2]=-1$, and $\alpha_I^{(k)}[2]=1$, with all remaining coefficients equal to zero, gives $\mathbf{U}^{(k)}=\mathbf{B}^{(k)}\mathbf{H}$, where $\{\mathbf{B}^{(k)}\}_{k\geq 0}$ is the non-backtracking recurrence in Eq.~\eqref{eq:nbt}.

    \item[(iii)] \textbf{Chebyshev propagation.}
    Suppose additionally that $\mathbf{S}$ is symmetric and $\sigma(\mathbf{S})\subseteq[-1,1]$. Setting $\alpha_S^{(1)}[1]=1$, and for every $k\geq 2$ setting $\alpha_S^{(k)}[1]=2$ and $\alpha_I^{(k)}[2]=-1$, with all remaining coefficients equal to zero, gives
    $\mathbf{U}^{(k)}=T_k(\mathbf{S})\mathbf{H}$,
    where $T_k$ is the degree-$k$ Chebyshev polynomial of the first kind. Moreover, $\|T_k(\mathbf{S})\|_2\leq 1$ and hence $\|\mathbf{U}^{(k)}\|_F\leq\|\mathbf{H}\|_F$ for every $k\geq 0$. In particular, one can take $\mathbf{S}=\mathbf{A}_{\mathrm{sym}}$.
\end{enumerate}
\end{theorem}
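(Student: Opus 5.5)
The plan is to prove all three items by strong induction on $k$, comparing the HOPPER recurrence directly with the target recurrence. The preliminary observation is that, once all coefficients except a few are zero, the window sum in the recurrence collapses to at most two terms:
\[
\mathbf{U}^{(k)}=\mathbf{C}^{(k)}_1\mathbf{U}^{(k-1)}+\mathbf{C}^{(k)}_2\mathbf{U}^{(k-2)},
\]
where $\mathbf{C}^{(k)}_j=\alpha_S^{(k)}[j]\mathbf{S}+\alpha_A^{(k)}[j]\mathbf{A}+\alpha_D^{(k)}[j]\mathbf{D}+\alpha_I^{(k)}[j]\mathbf{I}$. This uses only $M\geq 2$ and the convention $\mathbf{U}^{(i)}=\mathbf{0}$ for $i<0$. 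Everything then reduces to checking that $\mathbf{C}^{(k)}_1$ and $\mathbf{C}^{(k)}_2$ match the target recurrence at every $k$, including the initial steps.

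\textbf{Item (i).} Here $\mathbf{C}^{(k)}_1=\mathbf{S}$ and $\mathbf{C}^{(k)}_2=\mathbf{0}$, so $\mathbf{U}^{(k)}=\mathbf{S}\mathbf{U}^{(k-1)}$. Induction from $\mathbf{U}^{(0)}=\mathbf{H}$ gives $\mathbf{U}^{(k)}=\mathbf{S}^k\mathbf{H}$. Taking $\mathbf{S}=\mathbf{A}_{\mathrm{sym}}$ matches Eq.~\eqref{eq:lgsm-adj}.

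\textbf{Item (ii).} I would check the base cases separately, since the coefficients differ at $k=1,2$.
\begin{itemize}
\item At $k=1$: $\mathbf{U}^{(1)}=\mathbf{A}\mathbf{H}=\mathbf{B}^{(1)}\mathbf{H}$. The $j=2$ term vanishes because $\mathbf{U}^{(-1)}=\mathbf{0}$.
\item At $k=2$: $\mathbf{U}^{(2)}=\mathbf{A}\mathbf{U}^{(1)}-\mathbf{D}\mathbf{U}^{(0)}=(\mathbf{A}^2-\mathbf{D})\mathbf{H}=\mathbf{B}^{(2)}\mathbf{H}$.
\item For $k\geq 3$: $\mathbf{C}^{(k)}_1=\mathbf{A}$ and $\mathbf{C}^{(k)}_2=-\mathbf{D}+\mathbf{I}=-(\mathbf{D}-\mathbf{I})$. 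Assuming $\mathbf{U}^{(k-1)}=\mathbf{B}^{(k-1)}\mathbf{H}$ and $\mathbf{U}^{(k-2)}=\mathbf{B}^{(k-2)}\mathbf{H}$, we get $\mathbf{U}^{(k)}=\big(\mathbf{A}\mathbf{B}^{(k-1)}-(\mathbf{D}-\mathbf{I})\mathbf{B}^{(k-2)}\big)\mathbf{H}=\mathbf{B}^{(k)}\mathbf{H}$, which is exactly Eq.~\eqref{eq:nbt}.
\end{itemize}
The one subtlety is that the $k=2$ step uses $-\mathbf{D}$ rather than $-(\mathbf{D}-\mathbf{I})$. This is precisely why the coefficients at $k=2$ are specified separately.

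\textbf{Item (iii), algebraic identity.} The same scheme gives $\mathbf{U}^{(1)}=\mathbf{S}\mathbf{H}$ and $\mathbf{U}^{(k)}=2\mathbf{S}\mathbf{U}^{(k-1)}-\mathbf{U}^{(k-2)}$ for $k\geq 2$. This matches the three-term Chebyshev recurrence $T_0=1$, $T_1(x)=x$, $T_k=2xT_{k-1}-T_{k-2}$. Evaluating polynomials at the matrix $\mathbf{S}$ preserves the recurrence, so $\mathbf{U}^{(k)}=T_k(\mathbf{S})\mathbf{H}$.

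\textbf{Item (iii), norm bound.} Since $\mathbf{S}$ is symmetric, write $\mathbf{S}=\mathbf{Q}\boldsymbol{\Lambda}\mathbf{Q}^\top$ with $\mathbf{Q}$ orthogonal. Then $T_k(\mathbf{S})=\mathbf{Q}\,T_k(\boldsymbol{\Lambda})\,\mathbf{Q}^\top$, so $\|T_k(\mathbf{S})\|_2=\max_{\lambda\in\sigma(\mathbf{S})}|T_k(\lambda)|$. On $[-1,1]$ we have $T_k(\cos\theta)=\cos(k\theta)$, hence $|T_k|\leq 1$ there. The Frobenius bound then follows from $\|\mathbf{M}\mathbf{H}\|_F\leq\|\mathbf{M}\|_2\|\mathbf{H}\|_F$, applied column by column.

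\textbf{Main obstacle: the case $\mathbf{S}=\mathbf{A}_{\mathrm{sym}}$.} This is the only step needing more than bookkeeping. Symmetry is immediate. For $\sigma(\mathbf{A}_{\mathrm{sym}})\subseteq[-1,1]$ I would first try similarity: $\mathbf{A}_{\mathrm{sym}}=\mathbf{D}^{1/2}\mathbf{A}_{\mathrm{rw}}\mathbf{D}^{-1/2}$ on non-isolated nodes, and $\mathbf{A}_{\mathrm{rw}}$ is row-stochastic, so its spectral radius is at most $\|\mathbf{A}_{\mathrm{rw}}\|_\infty=1$. Isolated nodes, where $\mathbf{D}^{-1/2}$ is undefined, need a separate argument. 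The natural convention sets those rows and columns to zero, which contributes eigenvalue $0$. As a fallback that avoids similarity, I would use the Rayleigh-quotient identity
\[
\mathbf{x}^\top(\mathbf{I}\pm\mathbf{A}_{\mathrm{sym}})\mathbf{x}=\sum_{(u,v)\in\mathcal{E}}\Big(\tfrac{x_u}{\sqrt{\deg u}}\pm\tfrac{x_v}{\sqrt{\deg v}}\Big)^2\geq 0 .
\]
This holds when every node is non-isolated, giving $-1\leq\lambda\leq 1$ directly. Isolated nodes are again handled by the zero-row convention, since such coordinates of $\mathbf{x}$ do not appear in the sum.
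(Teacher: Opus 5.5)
Your proposal is correct and follows the paper's proof essentially step for step. Like the paper, you collapse the window sum to a two-term recurrence, treat $k=1,2$ separately for non-backtracking, match the Chebyshev three-term recurrence, and bound $\|T_k(\mathbf{S})\|_2$ via the spectral theorem and $T_k(\cos\theta)=\cos(k\theta)$; your extra check that $\sigma(\mathbf{A}_{\mathrm{sym}})\subseteq[-1,1]$, via similarity to the row-stochastic $\mathbf{A}_{\mathrm{rw}}$, fills in a point the paper's proof takes for granted.
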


Theorem~\ref{thm-main-body:fixed-extractor-containment} shows that learnable
sequence extraction preserves the propagation mechanisms used by existing
LGSMs while supporting a broader class of graph polynomials. We state the
result for a slight generalization of our structural recurrence that includes
an additional graph shift $\mathbf{S}$, with the original recurrence recovered
by setting its corresponding coefficients to zero. Taking
$\mathbf{S}=\mathbf{A}_{\mathrm{sym}}$ recovers normalized-adjacency
propagation and stable Chebyshev propagation, while adjacency-power and
non-backtracking propagation remain exact special cases through the original
operators. This choice is also directly connected to the random-walk
normalization used in our experiments. Since
$\mathbf{A}_{\mathrm{rw}}
=\mathbf{D}^{-1/2}\mathbf{A}_{\mathrm{sym}}\mathbf{D}^{1/2}$, the two operators are similar. Under the change of coordinates
$\widetilde{\mathbf{U}}^{(k)}=\mathbf{D}^{1/2}\mathbf{U}^{(k)}$, the
experimental recurrence can equivalently be expressed using
$\mathbf{A}_{\mathrm{sym}}$ as its graph shift. This symmetric representation
allows us to exploit the spectrum of $\mathbf{A}_{\mathrm{sym}}$ in
$[-1,1]$ and will be central to our analysis of over-squashing and
over-smoothing.

\subsection{Learned sequence extractor can prevent over-squashing and over-smoothing}

Finally, we study whether this larger propagation family can improve the
transmission of information over long graph distances. A direct comparison of sensitivities is only meaningful under a stability constraint, since arbitrarily large Jacobians could otherwise be obtained simply by amplifying the propagation operator. We therefore restrict our attention to the following class of polynomial propagators:

\vspace{0.5em}

\begin{definition}[Uniformly stable polynomial propagators]
\label{def:stable-polynomial-propagators}
For $r\geq 1$, define the class of degree-$r$ uniformly stable polynomial propagators by
\[
\mathcal{P}_r
:=
\left\{
p\in\mathbb{R}[x]:
\deg(p)\leq r
\;\text{and}\;
\|p\|_{L^\infty([-1,1])}
=
\sup_{x\in[-1,1]}|p(x)|
\leq 1
\right\}.
\]
\end{definition}

For a symmetric graph operator with spectrum in $[-1,1]$, every
$p\in\mathcal{P}_r$ satisfies $\|p(\mathbf{G})\|_2\leq 1$. Thus, all
propagators in $\mathcal{P}_r$ obey the same spectral stability constraint,
which allows their long-range sensitivities to be compared directly.

\vspace{0.5em}

\begin{theorem}[Optimal stable long-range sensitivity]
\label{thm-main-body:optimal-long-range-sensitivity}
Let $\mathbf{G}\in\mathbb{R}^{n\times n}$ be symmetric with
$\sigma(\mathbf{G})\subseteq[-1,1]$, and suppose
$\mathbf{G}_{uv}=0$ whenever $u\neq v$ and $(u,v)\notin\mathcal{E}$.
For nodes $u,v\in\mathcal{V}$ satisfying
$d_{\mathcal{G}}(u,v)=r\geq1$,
\begin{equation}
    \sup_{p\in\mathcal{P}_r}
    \left\|
        \frac{
            \partial[p(\mathbf{G})\mathbf{H}]_u
        }{
            \partial\mathbf{H}_v
        }
    \right\|_2
    =
    2^{r-1}|[\mathbf{G}^{r}]_{uv}|.
    \label{eq:optimal-long-range-sensitivity}
\end{equation}
The optimum is attained by the Chebyshev polynomial $p=T_r$. In comparison,
power propagation satisfies
\[
    \left\|
        \frac{
            \partial[\mathbf{G}^{r}\mathbf{H}]_u
        }{
            \partial\mathbf{H}_v
        }
    \right\|_2
    =
    |[\mathbf{G}^{r}]_{uv}|.
\]
Thus, whenever $[\mathbf{G}^{r}]_{uv}\neq0$, Chebyshev propagation improves
first-arrival sensitivity by a factor of $2^{r-1}$ while remaining
non-expansive.
\end{theorem}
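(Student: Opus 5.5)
Because $p(\mathbf{G})\mathbf{H}$ is linear in $\mathbf{H}$, the Jacobian $\partial[p(\mathbf{G})\mathbf{H}]_u/\partial\mathbf{H}_v$ is the scalar $[p(\mathbf{G})]_{uv}$ times the $d\times d$ identity. Its spectral norm is therefore $|[p(\mathbf{G})]_{uv}|$, and the problem reduces to maximizing a single matrix entry over $p\in\mathcal{P}_r$. Write $p(x)=\sum_{i=0}^r c_i x^i$. The sparsity assumption says $\mathbf{G}_{uv}=0$ for non-adjacent $u\neq v$, so, exactly as in the locality part of Theorem~\ref{thm-main-body:adaptive-polynomial-locality}, $[\mathbf{G}^i]_{uv}=0$ whenever $i<d_{\mathcal{G}}(u,v)=r$. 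The induction is the usual one: a nonzero term in the expansion of $[\mathbf{G}^i]_{uv}$ is a walk of length $i$ from $u$ to $v$ in which each step either stays put (diagonal entries are allowed) or moves along an edge. Such a walk has graph length at most $i$, so it cannot connect two nodes at distance $r>i$. Hence
\[
[p(\mathbf{G})]_{uv}=c_r[\mathbf{G}^r]_{uv},
\]
and the sensitivity equals $|c_r|\,|[\mathbf{G}^r]_{uv}|$. The power propagator $p(x)=x^r$ has $c_r=1$, which gives the stated comparison formula immediately.

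The remaining step is the classical extremal problem: maximize the leading coefficient $|c_r|$ over polynomials of degree at most $r$ with $\|p\|_{L^\infty([-1,1])}\le1$. I would cite or prove Chebyshev's theorem that the answer is $2^{r-1}$, attained by $T_r$. The proof goes through the equioscillation points $x_j=\cos(j\pi/r)$, $j=0,\dots,r$, where $T_r(x_j)=(-1)^j$. Suppose some $p\in\mathcal{P}_r$ had $|c_r|>2^{r-1}$. Flip the sign of $p$ if needed so that $c_r>0$, and set $q=T_r-\tfrac{2^{r-1}}{c_r}p$. Then $q$ has degree at most $r-1$, because the leading coefficients cancel. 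Also $\bigl|\tfrac{2^{r-1}}{c_r}p(x_j)\bigr|<1$, so $q(x_j)$ has the sign $(-1)^j$ at all $r+1$ points. The intermediate value theorem then gives $r$ distinct roots of $q$, which forces $q\equiv0$ and hence $T_r=\tfrac{2^{r-1}}{c_r}p$. This is a contradiction, since the left side has sup norm $1$ on $[-1,1]$ while the right side has sup norm strictly less than $1$.

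For attainment, $T_r\in\mathcal{P}_r$ with leading coefficient $2^{r-1}$ for $r\ge1$, which follows from the recurrence $T_k=2xT_{k-1}-T_{k-2}$. Non-expansiveness holds because $\mathbf{G}$ is symmetric with spectrum in $[-1,1]$: by the spectral theorem, $\|T_r(\mathbf{G})\|_2=\max_{\lambda\in\sigma(\mathbf{G})}|T_r(\lambda)|\le1$, as already used in Theorem~\ref{thm-main-body:fixed-extractor-containment}(iii). Combining these steps gives the supremum $2^{r-1}|[\mathbf{G}^r]_{uv}|$ and the claimed factor-$2^{r-1}$ gain whenever $[\mathbf{G}^r]_{uv}\ne0$.

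The only step with real content is the extremal leading-coefficient bound, and the equioscillation argument above handles it cleanly. The rest is bookkeeping: checking that the Jacobian's spectral norm is the absolute value of the matrix entry, and that diagonal entries of $\mathbf{G}$ do not break the locality argument.
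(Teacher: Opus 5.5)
Your proposal is correct and follows the paper's proof essentially step for step. Edge-support locality kills every term of degree below $r$, so the Jacobian equals $c_r[\mathbf{G}^r]_{uv}\mathbf{I}_d$, and the bound $|c_r|\le 2^{r-1}$ comes from the same equioscillation argument at $x_j=\cos(j\pi/r)$ that the paper uses in its leading-coefficient lemma, with attainment by $T_r$ and non-expansiveness via the spectral theorem.
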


Theorem~\ref{thm-main-body:optimal-long-range-sensitivity} follows from the interaction between graph distance and polynomial degree. If
$d_{\mathcal{G}}(u,v)=r$, every term of degree strictly less than $r$
vanishes in the $(u,v)$ entry. First-arrival sensitivity is therefore
determined entirely by the leading coefficient of the polynomial. Among all
polynomials in $\mathcal{P}_r$, the Chebyshev polynomial has the largest
possible leading coefficient, equal to $2^{r-1}$. It therefore maximizes
first-arrival sensitivity without increasing the spectral norm of the
propagator.

The improvement becomes particularly transparent on regular graphs.

\vspace{0.5em}

\begin{corollary}[Long-range sensitivity on regular graphs]
\label{cor:regular-graph-sensitivity}
Let $\mathcal{G}$ be a $q$-regular graph and let
$\mathbf{G}=\mathbf{A}_{\mathrm{sym}}=q^{-1}\mathbf{A}$. If
$d_{\mathcal{G}}(u,v)=r$ and $N_r(u,v)$ denotes the number of shortest paths
of length $r$ from $v$ to $u$, then
\begin{equation}
    \left\|
        \frac{
            \partial[\mathbf{A}_{\mathrm{sym}}^r\mathbf{H}]_u
        }{
            \partial\mathbf{H}_v
        }
    \right\|_2
    =
    \frac{N_r(u,v)}{q^r},
    \qquad
    \left\|
        \frac{
            \partial[T_r(\mathbf{A}_{\mathrm{sym}})\mathbf{H}]_u
        }{
            \partial\mathbf{H}_v
        }
    \right\|_2
    =
    2^{r-1}\frac{N_r(u,v)}{q^r}.
    \label{eq:regular-graph-sensitivity}
\end{equation}
In particular, when $q=2$ and $N_r(u,v)=1$, power propagation has sensitivity
$2^{-r}$ while Chebyshev propagation has sensitivity $1/2$ for every $r$.
\end{corollary}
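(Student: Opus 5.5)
This corollary follows by specializing Theorem~\ref{thm-main-body:optimal-long-range-sensitivity} to $\mathbf{G}=\mathbf{A}_{\mathrm{sym}}$ on a $q$-regular graph. The work is to check its hypotheses and compute $[\mathbf{G}^r]_{uv}$ explicitly.

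First, the hypotheses. For a $q$-regular graph, $\mathbf{D}=q\mathbf{I}$, so $\mathbf{A}_{\mathrm{sym}}=\mathbf{D}^{-1/2}\mathbf{A}\mathbf{D}^{-1/2}=q^{-1}\mathbf{A}$. This matrix is symmetric and is supported on $\mathcal{E}$, so it vanishes off the edge set. Its spectrum lies in $[-1,1]$ because $\|\mathbf{A}\|_2\le\max_v\deg(v)=q$ (for instance, by Gershgorin or the row-sum bound). Throughout I assume $q\ge1$, so that the degree normalization is defined.

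Second, the entry computation. We have $[\mathbf{A}_{\mathrm{sym}}^r]_{uv}=q^{-r}[\mathbf{A}^r]_{uv}$, and $[\mathbf{A}^r]_{uv}$ counts the walks of length $r$ from $v$ to $u$. Since $d_{\mathcal{G}}(u,v)=r$, every walk of length exactly $r$ between them is a shortest path, so $[\mathbf{A}^r]_{uv}=N_r(u,v)$. By the symmetry of $\mathbf{A}$, the direction of the walks does not matter.

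Third, the Jacobians. For any polynomial $p$, we have $\partial[p(\mathbf{G})\mathbf{H}]_u/\partial\mathbf{H}_v=[p(\mathbf{G})]_{uv}\mathbf{I}_d$, so its spectral norm is $|[p(\mathbf{G})]_{uv}|$. For power propagation this gives $N_r(u,v)/q^r$, which is the power-propagation formula in Theorem~\ref{thm-main-body:optimal-long-range-sensitivity}. For Chebyshev propagation, write $T_r(x)=2^{r-1}x^r+(\text{lower-degree terms})$ for $r\ge1$. Each lower-degree term $\mathbf{G}^j$ with $j<r$ has $(u,v)$ entry zero, because a nonzero entry would require a walk from $v$ to $u$ of length $j<d_{\mathcal{G}}(u,v)$; this uses that $\mathbf{G}$ is supported on edges and the diagonal. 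Hence $[T_r(\mathbf{G})]_{uv}=2^{r-1}N_r(u,v)/q^r$. Equivalently, this is the attained optimum in Eq.~\eqref{eq:optimal-long-range-sensitivity}. Both values are nonnegative, so taking absolute values changes nothing.

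Finally, the special case. Set $q=2$ and $N_r=1$. Power propagation then gives $2^{-r}$, and Chebyshev propagation gives $2^{r-1}\cdot2^{-r}=1/2$.

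None of these steps is a real obstacle. The only point needing care is the identification of length-$r$ walks with shortest paths, which relies on $d_{\mathcal{G}}(u,v)=r$ exactly.
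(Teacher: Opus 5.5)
Your proposal is correct and follows the paper's proof. Both rescale to $[\mathbf{A}_{\mathrm{sym}}^r]_{uv}=q^{-r}[\mathbf{A}^r]_{uv}$, identify the length-$r$ walks from $v$ to $u$ with shortest paths to get $N_r(u,v)$, and then apply the optimal-sensitivity theorem; your explicit check of that theorem's hypotheses (symmetry, edge support, spectrum in $[-1,1]$) and your direct leading-coefficient computation for $T_r$ (valid for $r\ge 1$) fill in details the paper leaves implicit.
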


Theorem~\ref{thm-main-body:optimal-long-range-sensitivity} therefore identifies a
propagation rule realizable by \textsc{HOPPER} that achieves the strongest
possible first-arrival sensitivity among uniformly stable degree-$r$
polynomials. The regular-graph case illustrates that this difference can be
exponential in graph distance. This provides a direct way through which a richer sequence extractor can mitigate the
attenuation of long-range information associated with over-squashing.

Finally, we analyze whether informative
variation across the graph remains present throughout a long hop sequence. Repeated normalized-adjacency propagation suppresses eigencomponents
associated with eigenvalues of magnitude strictly smaller than one. As
information depth increases, the resulting representations concentrate in
the stationary eigenspace. For an LGSM, however, the downstream sequence
model receives all intermediate hop states. This motivates measuring how much
nonstationary spectral information remains available on average across the
entire sequence.

\vspace{0.5em}

\begin{theorem}[Preservation of spectral information across information depth]
\label{thm-main-body:spectral-information-preservation}
Let
$\mathbf{A}_{\mathrm{sym}}
=\mathbf{Q}\boldsymbol{\Lambda}\mathbf{Q}^{\top}$,
and let $\boldsymbol{\Pi}$ be the orthogonal projector onto the eigenspace
associated with eigenvalue $1$. Assume every remaining eigenvalue satisfies
$|\lambda_i|<1$. Define
\[
    \mathbf{H}_{\perp}
    =
    (\mathbf{I}_n-\boldsymbol{\Pi})\mathbf{H},
    \qquad
    \rho
    =
    \max_{\lambda_i\neq1}|\lambda_i|<1.
\]
For adjacency-power propagation
$\mathbf{V}^{(\ell)}=\mathbf{A}_{\mathrm{sym}}^\ell\mathbf{H}$,
\begin{equation}
    \|(\mathbf{I}_n-\boldsymbol{\Pi})\mathbf{V}^{(\ell)}\|_F
    \leq
    \rho^\ell\|\mathbf{H}_{\perp}\|_F,
    \label{eq:adjacency-spectral-decay}
\end{equation}
and therefore
\begin{equation}
    \lim_{L\rightarrow\infty}
    \frac{1}{L}
    \sum_{\ell=0}^{L-1}
    \|(\mathbf{I}_n-\boldsymbol{\Pi})\mathbf{V}^{(\ell)}\|_F^2
    =
    0.
    \label{eq:adjacency-average-energy}
\end{equation}
In contrast, for the Chebyshev sequence
$\mathbf{U}^{(\ell)}=T_\ell(\mathbf{A}_{\mathrm{sym}})\mathbf{H}$,
\begin{equation}
    \lim_{L\rightarrow\infty}
    \frac{1}{L}
    \sum_{\ell=0}^{L-1}
    \|(\mathbf{I}_n-\boldsymbol{\Pi})\mathbf{U}^{(\ell)}\|_F^2
    =
    \frac{1}{2}\|\mathbf{H}_{\perp}\|_F^2.
    \label{eq:chebyshev-average-energy}
\end{equation}
\end{theorem}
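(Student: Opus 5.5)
The plan is to work entirely in the eigenbasis of $\mathbf{A}_{\mathrm{sym}}$, where both sequences decouple into independent scalar sequences, one per eigenvalue. Write $\mathbf{A}_{\mathrm{sym}}=\sum_i\lambda_i\mathbf{q}_i\mathbf{q}_i^\top$ with orthonormal $\mathbf{q}_i$. Then $\boldsymbol{\Pi}=\sum_{i:\lambda_i=1}\mathbf{q}_i\mathbf{q}_i^\top$. Hence $\boldsymbol{\Pi}$ commutes with every polynomial $p(\mathbf{A}_{\mathrm{sym}})$, and
\[
(\mathbf{I}_n-\boldsymbol{\Pi})\,p(\mathbf{A}_{\mathrm{sym}})\mathbf{H}=\sum_{i:\lambda_i\neq1}p(\lambda_i)\,\mathbf{q}_i\mathbf{q}_i^\top\mathbf{H}.
\]
Set $c_i:=\|\mathbf{q}_i^\top\mathbf{H}\|_2^2$. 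By orthonormality, the Frobenius energy is $\sum_{i:\lambda_i\neq1}p(\lambda_i)^2c_i$, and in particular $\|\mathbf{H}_\perp\|_F^2=\sum_{i:\lambda_i\neq1}c_i$. Both claims then reduce to statements about the scalar sequences $p_\ell(\lambda_i)^2$ for the finitely many $\lambda_i\neq1$.

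\textbf{Adjacency powers.} Take $p=x^\ell$. Each term satisfies $\lambda_i^{2\ell}c_i\le\rho^{2\ell}c_i$, which gives \eqref{eq:adjacency-spectral-decay} after taking square roots. For the Cesàro average, $\sum_{\ell\ge0}\rho^{2\ell}\|\mathbf{H}_\perp\|_F^2=\|\mathbf{H}_\perp\|_F^2/(1-\rho^2)$ is finite. The partial sums are therefore bounded, and dividing by $L$ gives \eqref{eq:adjacency-average-energy}.

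\textbf{Chebyshev sequence.} Take $p=T_\ell$. Since the assumption gives $|\lambda_i|<1$ for every $\lambda_i\neq1$, write $\lambda_i=\cos\theta_i$ with $\theta_i\in(0,\pi)$. Then $T_\ell(\lambda_i)=\cos(\ell\theta_i)$ and
\[
T_\ell(\lambda_i)^2=\tfrac12+\tfrac12\cos(2\ell\theta_i).
\]
Since $2\theta_i\in(0,2\pi)$, we have $e^{2\mathrm{i}\theta_i}\neq1$. The geometric-sum formula then gives
\[
\Bigl|\sum_{\ell=0}^{L-1}\cos(2\ell\theta_i)\Bigr|\le\frac{1}{|\sin\theta_i|},
\]
which is bounded in $L$, so $\frac1L\sum_{\ell=0}^{L-1}T_\ell(\lambda_i)^2\to\frac12$. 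The case $\lambda_i=0$, i.e.\ $\theta_i=\pi/2$, is included and the bound still applies. The sum over $i$ is finite, so the limit passes through it and yields $\frac12\sum_{i:\lambda_i\neq1}c_i=\frac12\|\mathbf{H}_\perp\|_F^2$, which is \eqref{eq:chebyshev-average-energy}.

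\textbf{Main obstacle.} The only delicate step is the oscillatory average in the Chebyshev case. It hinges on excluding $\theta_i\in\{0,\pi\}$. The case $\theta_i=0$ is excluded because such eigenvalues belong to $\boldsymbol{\Pi}$. The case $\theta_i=\pi$ would arise for bipartite graphs, where $\lambda=-1$ gives $T_\ell(-1)^2=1$ and the average would be $1$ rather than $\frac12$; it is ruled out by the standing hypothesis $|\lambda_i|<1$. I will state this dependence explicitly, and note that repeated eigenvalues cause no difficulty because the argument is per eigenvector.
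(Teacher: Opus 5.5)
Your proposal is correct and follows essentially the same route as the paper: you expand $\mathbf{H}$ in the orthonormal eigenbasis, bound each nonstationary mode by $\rho^{2\ell}$ and sum the geometric series to $\|\mathbf{H}_\perp\|_F^2/(1-\rho^2)$ for the power sequence, and for each mode with $\lambda_i\in(-1,1)$ you use $\cos^2 x=\tfrac12(1+\cos 2x)$ with a bounded geometric sum, which is exactly the paper's lemma on the average energy of Chebyshev modes. Your explicit bound $1/|\sin\theta_i|$ and your remark that $\lambda_i=-1$ would change the limit to $1$ are correct refinements of the same argument.
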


A common way to measure over-smoothing in GNNs is via the Dirchlet energy \citep{arroyo2026vanishing, rusch2023survey}. Given a feature matrix $\mathbf{H} \in \mathbb{R}^{n \times d}$, we define the Dirchlet energy $\mathcal{E}(\mathbf{H})
:=\operatorname{Tr}\!\left(\mathbf{H}^{\top}
\boldsymbol{\Delta}\mathbf{H}\right)$ where $\boldsymbol{\Delta}$ denotes the normalized graph Laplacian \cite{chung1997spectral}. To see why this quantity measures over-smoothing, note that if we use an unweighted graph $\mathcal{G}$, then one can write the Dirchlet energy as follows:
\[
    \mathcal{E}(\mathbf{H}) = \sum_{(u, v) \in \mathcal{E}} \left\| \frac{\mathbf{h}_{u}}{\sqrt{d_{u}}} - \frac{\mathbf{h}_{v}}{\sqrt{d_{v}}} \right\|^2,
\]
where $d_{u} = \sum_{v^{\prime}=1}^{\lvert \mathcal{V} \rvert} A_{uv^{\prime}}$ is the degree of node $u$. As neighboring node representations $\mathbf{h}_{u}$ and $\mathbf{h}_{v}$ become more similar, the Dirchlet energy will tend towards 0. A simple consequence of Theorem~\ref{thm-main-body:spectral-information-preservation} is that $\mathcal{E}(\mathbf{A}_{\mathrm{sym}}^k\mathbf{H})\rightarrow 0$ whereas $\lim_{L\rightarrow\infty}
    \tfrac{1}{L}\sum_{\ell=0}^{L-1}
    \mathcal{E}\left(
        T_\ell(\mathbf{A}_{\mathrm{sym}})\mathbf{H}
    \right) \rightarrow \frac{1}{2}\mathcal{E}(\mathbf{H})$. Thus provided that the initial Dirchlet energy is non-zero, HOPPER can avoid the systematic loss of informative nonstationary spectral content that occurs under adjacency-power propagation.

\section{Experimental Evaluation}
\label{sec:experiments}

We evaluate \textsc{HOPPER} on the \textsc{ECHO} benchmark~\citep{miglior2026echo}, which consists of synthetic graph property prediction tasks designed to test long-range information propagation. We focus on three \textsc{ECHO-Synth} tasks: graph diameter prediction (\textsc{Diam}), node eccentricity prediction (\textsc{Ecc}), and single-source shortest-path prediction (\textsc{Sssp}). Each task requires the model to aggregate structural information over many hops, making the benchmark well suited for evaluating the effectiveness of graph sequence extraction at large information depths.

We compare \textsc{HOPPER} against the published \textsc{ECHO} baselines, including classical message-passing GNNs, dynamics and physics-inspired graph networks, graph transformers, rewiring-based multi-hop methods, and LGSM~\citep{mathys2026lgsm}, our closest sequence-based baseline. We follow the \textsc{ECHO} evaluation protocol and use the same data splits, training objective, and test metrics for all tasks. Additional model configurations, optimization details, and dataset statistics are provided in Appendix~\ref{appendix:additional-methodological}.

\begin{table}[H]
\captionsetup{skip=6pt}
\caption{
Performance on synthetic graph property prediction tasks from \textsc{Echo-Synth}, which are designed to require long-range information propagation. We report test MSE and MAE for diameter (\textsc{Diam}), eccentricity (\textsc{Ecc}), and single-source shortest-path (\textsc{Sssp}) prediction. HOPPER achieves the best performance on \textsc{Ecc} and \textsc{Sssp}, while LGSM performs best on \textsc{Diam}. Results are reported as mean $\pm$ standard deviation over three seeds.
}
\label{tab:echo-results}
\centering

\begingroup
\small
\setlength{\tabcolsep}{3.25pt}
\renewcommand{\arraystretch}{1.08}
\begin{adjustbox}{max width=\linewidth}
\begin{tabular}{
    @{}l
    M@{\,{\scriptsize$\pm$}\,}D
    M@{\,{\scriptsize$\pm$}\,}D
    M@{\,{\scriptsize$\pm$}\,}D
    M@{\,{\scriptsize$\pm$}\,}D
    M@{\,{\scriptsize$\pm$}\,}D
    M@{\,{\scriptsize$\pm$}\,}D
    @{}
}
\toprule
\multicolumn{1}{c}{\textsc{Dataset}}
& \multicolumn{4}{c}{\textsc{Diam}}
& \multicolumn{4}{c}{\textsc{Ecc}}
& \multicolumn{4}{c}{\textsc{Sssp}} \\
\cmidrule(lr){2-5}
\cmidrule(lr){6-9}
\cmidrule(l){10-13}

& \multicolumn{2}{c}{MSE $\downarrow$}
& \multicolumn{2}{c}{MAE $\downarrow$}
& \multicolumn{2}{c}{MSE $\downarrow$}
& \multicolumn{2}{c}{MAE $\downarrow$}
& \multicolumn{2}{c}{MSE $\downarrow$}
& \multicolumn{2}{c}{MAE $\downarrow$} \\
\midrule

A-DGN
& 4.818  & 0.108
& 1.151  & 0.038
& 35.967 & 0.492
& 4.981  & 0.037
& 4.425  & 0.879
& 1.176  & 0.140 \\

DRew
& 3.756  & 0.170
& 1.243  & 0.047
& 32.247 & 0.148
& 4.651  & 0.020
& 6.589  & 0.015
& 1.279  & 0.011 \\

GCN
& 22.872 & 2.766
& 3.832  & 0.262
& 39.706 & 0.460
& 5.233  & 0.034
& 9.743  & 0.757
& 2.102  & 0.094 \\

GCNII
& 9.696  & 0.568
& 2.005  & 0.093
& 39.911 & 0.518
& 5.241  & 0.030
& 10.369 & 3.575
& 2.128  & 0.429 \\

GIN
& 7.238  & 1.153
& 1.630  & 0.161
& 34.454 & 1.201
& 4.869  & 0.092
& 11.868 & 2.689
& 2.234  & 0.271 \\

GPS
& 10.454 & 0.610
& 2.160  & 0.098
& 33.346 & 0.226
& 4.758  & 0.021
& 1.255  & 0.113
& 0.472  & 0.050 \\

GRIT
& 3.877  & 0.295
& 1.014  & 0.046
& 38.667 & 1.903
& 5.091  & 0.158
& 0.147  & 0.083
& 0.121  & 0.013 \\

GraphCON
& 16.427 & 1.419
& 2.969  & 0.189
& 43.505 & 0.017
& 5.474  & 0.001
& 52.104 & 0.016
& 5.734  & 0.011 \\

PH-DGN
& 6.699  & 2.728
& 1.627  & 0.398
& 37.510 & 1.416
& 5.068  & 0.126
& 4.656  & 3.013
& 1.323  & 0.485 \\

SWAN
& 4.950  & 0.265
& 1.121  & 0.070
& 34.208 & 0.578
& 4.840  & 0.045
& 2.905  & 1.556
& 0.896  & 0.232 \\

LGSM
& {\bfseries 3.089}  & {\bfseries 0.389}
& {\bfseries 0.859}  & {\bfseries 0.044}
& 13.549 & 0.539
& 2.430  & 0.070
& 0.040  & 0.008
& 0.076  & 0.009 \\

\midrule

\textbf{HOPPER (ours)}
& {3.200} & {0.528}
& {0.973} & {0.040}
& {\bfseries 6.571} & {\bfseries 2.656}
& {\bfseries 1.820} & {\bfseries 0.386}
& {\bfseries 0.021} & {\bfseries 0.008}
& {\bfseries 0.035} & {\bfseries 0.002} \\

\bottomrule
\end{tabular}
\end{adjustbox}
\endgroup
\end{table}

Table~\ref{tab:echo-results} reports \textsc{HOPPER} achieves the best results on eccentricity and single-source shortest-path prediction, while remaining competitive with LGSM on diameter.


We further evaluate \textsc{HOPPER} on the hard \textsc{LRIM-16} benchmark~\citep{mathys2026lrim}, a node-level regression task defined on $16\times16$ periodic Ising grids with long-range interactions. Unlike the structural prediction tasks in \textsc{Echo-Synth}, \textsc{LRIM} requires predicting the local energy associated with each node from a spin configuration whose interactions extend over long graph distances. This provides a complementary setting for evaluating whether the extracted hop sequence can capture long-range dependencies beyond purely structural graph properties. Further information about both datasets can be found in Appendix~\ref{appendix:additional-methodological}.


\begin{table}[H]
\captionsetup{skip=6pt}
\caption{
\textsc{LRIM} test performance for different memory window sizes $M$, reported as $\log_{10}(\mathrm{MSE}_{\mathrm{test}})$ (mean $\pm$ standard deviation over two seeds).
}
\label{tab:lrim-window}
\centering

\begingroup
\small
\setlength{\tabcolsep}{7pt}
\renewcommand{\arraystretch}{1.08}
\begin{tabular}{@{}lcccc@{}}
\toprule
Window size $M$ & $2$ & $4$ & $8$ & $16$ \\
\midrule
$\log_{10}(\mathrm{MSE}_{\mathrm{test}})$ $\downarrow$
& $-3.451 \pm 0.069$
& $-3.440 \pm 0.040$
& $\boldsymbol{-3.505 \pm 0.063}$
& $-3.502 \pm 0.081$ \\
\bottomrule
\end{tabular}
\endgroup
\end{table}

Table~\ref{tab:lrim-window} shows that performance varies across window sizes, with $M=8$ achieving the strongest result among the settings considered. This suggests that the amount of structural memory retained during sequence extraction can meaningfully affect performance.

\section{Conclusion}
\label{sec:conclusion}

We introduced \textsc{HOPPER}, an end-to-end learnable extension of LGSM that learns hop sequence extraction prior to state-space processing. By decoupling information depth from nonlinear processing depth, \textsc{HOPPER} enables long-range propagation without over-smoothing, while avoiding any canonical node ordering. On the \textsc{Echo} benchmark, \textsc{HOPPER} achieves the best results on eccentricity and single-source shortest-path prediction while remaining competitive on diameter. On \textsc{LRIM-16}, performance varies with the structural memory window, highlighting the flexibility of learned hop extraction. Extending \textsc{HOPPER} to real-world molecular domains remains a natural direction for future work. Code is available at \url{https://anonymous.4open.science/r/HOPPER-0593/}.

\bibliography{iclr2026_conference}
\bibliographystyle{iclr2026_conference}

\newpage

\appendix

\section{Related Work}
\label{appendix:related_work}
\noindent\textbf{Learning long-range dependencies on graphs.}
Although MPNNs effectively model local structure, they often struggle with long-range dependencies because of over-smoothing, over-squashing, and vanishing gradients \citep{li2018deeper,oono2020graph,alon2021bottleneck,topping2022understanding,digiovanni2023power}. These limitations affect standard architectures such as GCN, GraphSAGE, and GIN on tasks requiring global context \citep{kipf2017semi,hamilton2017inductive,xu2019powerful,dwivedi2022longrange}. Proposed remedies include deeper residual architectures and normalization methods \citep{rong2020dropedge,zhao2020pairnorm,chen2020simple}, graph rewiring and diffusion \citep{klicpera2019diffusion,karhadkar2023fosr,gutteridge2023drew}, and Graph Transformers such as SAN, Graphormer, GPS, and Exphormer \citep{kreuzer2021rethinking,ying2021graphormer,rampasek2022gps,shirzad2023exphormer}. These methods improve communication between distant nodes through modified connectivity, global attention, or more stable deep computation.

\noindent\textbf{Multi-hop and adaptive graph propagation.}
A related line of work represents graph information across several neighborhood scales. APPNP separates feature transformation from personalized PageRank propagation \citep{gasteiger2019predict}, while Jumping Knowledge Networks combine representations from different message-passing depths \citep{xu2018representation}. MixHop and SIGN explicitly construct features from multiple powers or diffusions of the graph operator \citep{abu2019mixhop,frasca2020sign}, and DAGNN adaptively combines representations obtained at different propagation depths \citep{liu2020deeper}. GPR-GNN and BernNet learn flexible graph filters through generalized PageRank and Bernstein polynomial parameterizations \citep{chien2021adaptive,he2021bernnet}, while adaptive diffusion methods learn task-dependent propagation behavior \citep{zhao2021adaptive}. Feature-conditioned methods such as GAT and GNN-FiLM further adapt local message passing to node representations \citep{velickovic2018graph,brockschmidt2020gnnfilm}. Collectively, these works show that the useful propagation operator and neighborhood range are strongly dependent on the data and task.

\noindent\textbf{State-space and sequence models for graphs.}
Structured state-space models such as S4, LRU, Mamba, and Mamba2 provide efficient alternatives to attention for long-range sequence modeling \citep{gu2021efficiently,orvieto2023resurrecting,gu2024mamba,dao2024transformers}. Recent graph architectures adapt these models by constructing sequences through neighborhood tokenization, random walks, node prioritization, or learned permutations \citep{behrouz2024graph,wang2024graphmamba,tonshoff2023graph,kim2024graph}. Hybrid Graph Sequence Models combine graph-specific local encoders with global sequence processors \citep{behrouz2025hybrid}. Other approaches incorporate state-space computation directly into graph architectures: Graph State Space Convolution constructs permutation-equivariant graph operators using state-space kernels and invariant aggregation \citep{huang2024gssc}, while Message-Passing State-Space Models embed recurrent state-space dynamics within message passing and analyze their long-range information flow \citep{ceni2025mpssm}. These works explore different choices for defining a sequence axis while retaining graph structure and permutation symmetries.

\noindent\textbf{Linearized Graph Sequence Models.}
Linearized Graph Sequence Models interpret the successive propagation states of each node as a sequence over information depth \citep{mathys2026lgsm}. This formulation separates information depth from nonlinear processing depth and allows modern sequence models to process node-wise propagation trajectories without imposing an ordering over graph nodes. LGSM considers normalized-adjacency and non-backtracking sequence extractors and studies how their design affects sensitivity, stability, and information flow. Adjacency-based extraction is closely connected to polynomial graph filters and diffusion-based propagation \citep{defferrard2016convolutional,klicpera2019diffusion,chien2021adaptive}, while non-backtracking extraction draws on classical operators from spectral graph theory and community detection \citep{hashimoto1989zeta,krzakala2013spectral,bordenave2015nonbacktracking}. Our architecture builds on the LGSM framework and draws from adaptive graph filtering, feature-conditioned propagation, hypernetworks, and permutation-invariant graph summarization \citep{velickovic2018graph,ha2017hypernetworks,zaheer2017deepsets,lee2019settransformer}.

\newpage
\section{Additional Methodological an Experimental Details}
\label{appendix:additional-methodological}

\noindent\textbf{Operator Paths for the Structural Recurrence.}
\label{app:normalized-path}
The structural recurrence in Eq.~\eqref{eq:shadow} uses the unnormalized adjacency and degree matrices $\mathbf{A}$ and $\mathbf{D}$. We refer to this as the \emph{exact} path because it exactly recovers the polynomial structure of the fixed LGSM extractors. In particular, the containment results of Theorem~\ref{thm:fixed-extractor-containment}, including the non-backtracking recurrence in Eq.~\eqref{eq:nbt}, hold without approximation.
The exact path can become numerically unstable at large information depths. Since $\mathbf{A}$ and $\mathbf{D}$ are unnormalized, the magnitude of the structural state can grow rapidly with the hop index, reflecting the combinatorial growth in the number of walks represented by the recurrence. While this unnormalized formulation is necessary for the exact containment results above, it can lead to poor numerical conditioning and overflow for sufficiently long hop sequences.
For longer hop sequences, we instead use bounded propagation operators to improve numerical stability. Specifically, we replace $\mathbf{A}$ with the random-walk matrix $\mathbf{A}_{\mathrm{rw}}=\mathbf{D}^{-1}\mathbf{A}$ and $\mathbf{D}$ with $\mathbf{I}-\mathbf{D}^{-1}$, yielding the \emph{normalized} path
\begin{equation}
\mathbf{U}^{(k)}
=
\sum_{j=1}^{M}
\left(
\alpha_A^{(k)}[j]\mathbf{A}_{\mathrm{rw}}
+
\alpha_D^{(k)}[j](\mathbf{I}-\mathbf{D}^{-1})
+
\alpha_I^{(k)}[j]\mathbf{I}
\right)
\mathbf{U}^{(k-j)},
\label{eq:shadow-norm}
\end{equation}
with the same initial conditions as Eq.~\eqref{eq:shadow}. Unlike the exact path, these operators avoid the degree-dependent growth of the unnormalized recurrence and remain numerically stable at larger information depths.
As an additional safeguard against numerical overflow, we clip $\mathbf{U}^{(k)}$ to a fixed numerical range. No clipping occurs in any of our reported experiments, so the characterization of Theorem~\ref{thm:adaptive-polynomial-locality} applies throughout. We use the exact path for $L\leq 20$ and the normalized path for $L>20$. Accordingly, all experiments in Section~\ref{sec:experiments} use the normalized recurrence in Eq.~\eqref{eq:shadow-norm}, with $L=40$ for \textsc{ECHO-Synth} and $L=32$ for \textsc{LRIM} .

\noindent\textbf{Dataset Summaries. }
\label{app:datasets}
Tables~\ref{tab:dataset-summary} and~\ref{tab:dataset-stats} summarize the datasets used in our experiments. Statistics for \textsc{ECHO-Synth} are reproduced from \citet{miglior2026echo}, with the indented rows corresponding to its constituent graph families. For \textsc{LRIM-16} \cite{mathys2026lrim}, the reported structural statistics are exact rather than averaged, since every graph has the same $16\times16$ periodic grid structure and differs only in its spin configuration.

\begin{table}[H]
\captionsetup{skip=6pt}
\caption{
Summary of the datasets used in our experiments. \textsc{ECHO-Synth} is taken from \citet{miglior2026echo}, while \textsc{LRIM-16} follows the long-range graph benchmark setting of \citet{mathys2026lrim}. Neither dataset uses edge features.
}
\label{tab:dataset-summary}
\centering

\begingroup
\small
\setlength{\tabcolsep}{4pt}
\renewcommand{\arraystretch}{1.08}

\begin{tabularx}{\linewidth}{
    @{}
    >{\raggedright\arraybackslash}p{0.17\linewidth}
    >{\raggedright\arraybackslash}X
    >{\raggedright\arraybackslash}p{0.30\linewidth}
    @{}
}
\toprule
\textsc{Dataset}
& \textsc{Structure / Node Input}
& \textsc{Prediction Target} \\
\midrule

\textsc{ECHO-Synth}~\citep{miglior2026echo}
& Synthetic graph families with random scalar node features and a source indicator for \textsc{Sssp}
& Graph diameter, node eccentricity, and \textsc{Sssp} distance \\

\textsc{LRIM-16} (hard)~\citep{mathys2026lrim}
& $16\times16$ periodic Ising grid with binary spin states
& Node-wise energy change $\Delta E_i$ \\

\bottomrule
\end{tabularx}

\endgroup
\end{table}

\begin{table}[H]
\captionsetup{skip=6pt}
\caption{
Dataset statistics. \textsc{ECHO-Synth} statistics are reproduced from \citet{miglior2026echo} and reported as mean $\pm$ standard deviation. Statistics for \textsc{LRIM-16}~\citep{mathys2026lrim} are exact. Undirected edges are counted once.
}
\label{tab:dataset-stats}
\centering

\begingroup
\small
\setlength{\tabcolsep}{3.5pt}
\renewcommand{\arraystretch}{1.08}

\begin{adjustbox}{max width=\linewidth}
\begin{tabular}{@{}lrrrrrccc@{}}
\toprule
\multicolumn{1}{c}{\textsc{Dataset}}
& \multicolumn{5}{c}{\textsc{Graph Statistics}}
& \multicolumn{3}{c}{\textsc{Features / Tasks}} \\
\cmidrule(lr){2-6}
\cmidrule(l){7-9}

& \# Graphs
& Nodes
& Degree
& Edges
& Diameter
& Node
& Edge
& Tasks \\
\midrule

\textsc{ECHO-Synth}~\citep{miglior2026echo}
& 10{,}080
& $83.69 \pm 66.24$
& $2.53 \pm 1.19$
& $211.63 \pm 209.39$
& $28.50 \pm 6.92$
& 2
& --
& 3 \\

\quad \texttt{line}
& 1{,}680
& $75.60 \pm 27.32$
& $2.37 \pm 0.10$
& $90.10 \pm 33.89$
& $28.50 \pm 6.92$
& 2
& --
& 3 \\

\quad \texttt{ladder}
& 1{,}680
& $56.52 \pm 13.82$
& $2.92 \pm 0.02$
& $82.54 \pm 20.72$
& $28.50 \pm 6.92$
& 2
& --
& 3 \\

\quad \texttt{grid}
& 1{,}680
& $193.10 \pm 93.10$
& $2.95 \pm 0.12$
& $288.32 \pm 145.29$
& $28.50 \pm 6.92$
& 2
& --
& 3 \\

\quad \texttt{tree}
& 1{,}680
& $60.42 \pm 17.17$
& $1.96 \pm 0.01$
& $59.42 \pm 17.17$
& $28.50 \pm 6.92$
& 2
& --
& 3 \\

\quad \texttt{caterpillar}
& 1{,}680
& $34.71 \pm 7.96$
& $1.94 \pm 0.02$
& $33.71 \pm 7.96$
& $28.50 \pm 6.92$
& 2
& --
& 3 \\

\quad \texttt{lobster}
& 1{,}680
& $81.79 \pm 25.46$
& $1.97 \pm 0.01$
& $80.79 \pm 25.46$
& $28.50 \pm 6.92$
& 2
& --
& 3 \\

\midrule

\textsc{LRIM-16} (hard)~\citep{mathys2026lrim}
& 10{,}000
& 256
& 4.00
& 512
& 16
& 1
& --
& 1 \\

\bottomrule
\end{tabular}
\end{adjustbox}

\endgroup
\end{table}

\newpage

\noindent\textbf{\textsc{LRIM} Training Setup.}
\label{app:lrim-setup}

\emph{Dataset.}
We use the hard $16\times16$ variant of \textsc{LRIM}~\citep{mathys2026lrim}, with interaction decay $\sigma=0.6$, corresponding to the strongest long-range dependency setting. Each graph is a periodic grid with $256$ nodes, and the task is node-level energy regression on the original target scale. We use $8{,}000/1{,}000/1{,}000$ graphs for training, validation, and testing, respectively.

\emph{Model.}
All experiments use the hypernetwork-conditioned extractor of Section~\ref{sec:hyper} with sequence length $L=32$, $8$ processing blocks, hidden dimension $d=64$, SSM state dimension $64$, and hypernetwork hidden dimension $128$. Since $L>20$, we use the normalized recurrence in Eq.~\eqref{eq:shadow-norm}. The memory window $M$ is the only model parameter varied in Table~\ref{tab:lrim-window}.

\emph{Optimization.}
We train with AdamW using a learning rate of $3\times10^{-4}$, batch size $32$, and gradient clipping at $1.0$. Training proceeds for at most $300$ epochs with early stopping on validation loss using a patience of $100$ epochs. Each configuration is evaluated with seeds $\{0,1\}$ under the same training budget.

\emph{Evaluation.}
Following the \textsc{LRIM} evaluation convention, we compute $\log_{10}(\mathrm{MSE})$ separately for each seed and report the mean and standard deviation across seeds. This differs from applying $\log_{10}$ after averaging the MSE across seeds.

\emph{Memory window.}
We evaluate $M\in\{2,4,8\}$. The extractor requires $\mathcal{O}(LM|\mathcal{E}|d)$ computation for sparse propagation, while the memory required by the batched recurrence scales linearly with $M$. We therefore restrict the sweep to these window sizes.

\newpage

\section{Theoretical Properties of \textsc{HOPPER}}
\label{sec:theoretical-properties}

\subsection{Permutation Equivariance of \textsc{HOPPER}}

One of the main desirable properties a graph model should satisfy is insensitivity to arbitrary labeling of its nodes. For \textsc{HOPPER}, this requires more than equivariance of the structural recurrence, since the propagation coefficients themselves are generated from a learned graph-level summary. We therefore first show that this summary is invariant to node permutations. We then use this result to establish permutation equivariance of the complete sequence extractor.

\vspace{0.5em}

\begin{lemma}[Permutation invariance of the graph summary]
\label{lem:graph-summary-invariance}
Let $\mathbf{P}\in\mathbb{R}^{n\times n}$ be any permutation matrix, and define $\mathbf{A}'=\mathbf{P}\mathbf{A}\mathbf{P}^{\top}$ and $\mathbf{H}'=\mathbf{P}\mathbf{H}$. Let $\mathbf{h}_{\mathrm{graph}}$ be the structural graph summary defined in Eqs.~\eqref{eq:pool-in}--\eqref{eq:pool-out}. Then
$\mathbf{h}_{\mathrm{graph}}(\mathbf{H}',\mathbf{A}')
=
\mathbf{h}_{\mathrm{graph}}(\mathbf{H},\mathbf{A})$.
\end{lemma}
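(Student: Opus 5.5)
The plan is to trace the permutation through each stage of the summary pipeline in Eqs.~\eqref{eq:pool-in}--\eqref{eq:pool-out}. Each intermediate node-indexed quantity picks up a left factor of $\mathbf{P}$, and the final attention over the node dimension cancels it.

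First, the degree matrix. Since $\mathbf{P}^\top\mathbf{P}=\mathbf{I}$, we have $\mathbf{A}'\mathbf{1}=\mathbf{P}\mathbf{A}\mathbf{P}^\top\mathbf{1}=\mathbf{P}\mathbf{A}\mathbf{1}$, because $\mathbf{P}^\top\mathbf{1}=\mathbf{1}$. Hence $\mathbf{D}'=\mathbf{P}\mathbf{D}\mathbf{P}^\top$, since a diagonal matrix conjugated by a permutation is the diagonal of the permuted vector. Inverting gives $\mathbf{D}'^{-1}=\mathbf{P}\mathbf{D}^{-1}\mathbf{P}^\top$, assuming no isolated nodes, or using the same convention for zero degrees in both graphs. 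It follows that $\mathbf{A}_{\mathrm{rw}}'=\mathbf{P}\mathbf{A}_{\mathrm{rw}}\mathbf{P}^\top$, and therefore $\mathbf{H}_{\mathrm{nbr}}'=\mathbf{P}\mathbf{A}_{\mathrm{rw}}\mathbf{P}^\top\mathbf{P}\mathbf{H}=\mathbf{P}\mathbf{H}_{\mathrm{nbr}}$. Concatenation is feature-wise, so $\mathbf{H}_{\mathrm{str}}'=\mathbf{P}\mathbf{H}_{\mathrm{str}}$. Right multiplication by the shared weights then gives $\mathbf{K}_{\mathrm{s}}'=\mathbf{P}\mathbf{K}_{\mathrm{s}}$ and $\mathbf{V}_{\mathrm{s}}'=\mathbf{P}\mathbf{V}_{\mathrm{s}}$.

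Next, the attention step. The seed queries $\mathbf{Q}_{\mathrm{s}}$ are parameters and do not depend on the input. The logits become $\mathbf{Q}_{\mathrm{s}}\mathbf{K}_{\mathrm{s}}'^\top/\sqrt d=(\mathbf{Q}_{\mathrm{s}}\mathbf{K}_{\mathrm{s}}^\top/\sqrt d)\mathbf{P}^\top$, which is the original logit matrix with its columns (the node/key index) permuted. The row-wise softmax normalizes each row by a sum over all columns, and that sum is invariant to column reordering. Therefore $\mathrm{softmax}(\mathbf{M}\mathbf{P}^\top)=\mathrm{softmax}(\mathbf{M})\mathbf{P}^\top$ for any $\mathbf{M}$; this is the one step that needs a careful, if easy, justification. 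Consequently $\mathbf{H}_{\mathrm{lat}}'=\mathrm{softmax}(\cdot)\mathbf{P}^\top\mathbf{P}\mathbf{V}_{\mathrm{s}}=\mathbf{H}_{\mathrm{lat}}$, and applying $\mathrm{vec}$ gives equality of $\mathbf{h}_{\mathrm{graph}}$.

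I expect no real obstacle. The only points needing care are the column-equivariance of the row-wise softmax and the transformation of $\mathbf{D}$ (including its inverse) under relabeling. Both follow from elementary identities for permutation matrices.
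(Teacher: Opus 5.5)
Your proof is correct and follows the same route as the paper's: push the left factor $\mathbf{P}$ through $\mathbf{A}_{\mathrm{rw}}$, $\mathbf{H}_{\mathrm{str}}$, $\mathbf{K}_{\mathrm{s}}$ and $\mathbf{V}_{\mathrm{s}}$, then use that row-wise softmax commutes with column permutations, so $\mathbf{P}^{\top}\mathbf{P}=\mathbf{I}$ cancels in $\mathbf{H}_{\mathrm{lat}}$. Your explicit check that $\mathbf{D}'=\mathbf{P}\mathbf{D}\mathbf{P}^{\top}$, together with the zero-degree convention, fills in a step the paper only asserts ``by definition.''
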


\begin{proof}[Proof of Lemma \ref{lem:graph-summary-invariance}]
By definition, we have $\mathbf{A}'_{\mathrm{rw}}=\mathbf{P}\mathbf{A}_{\mathrm{rw}}\mathbf{P}^{\top}$. Using this with Eq.~\eqref{eq:pool-in} gives us $\mathbf{X}'_{\mathrm{nbr}}
=\mathbf{A}'_{\mathrm{rw}}\mathbf{H}'
=\mathbf{P}\mathbf{X}_{\mathrm{nbr}}$.
Hence $\mathbf{X}'_{\mathrm{str}}=\mathbf{P}\mathbf{X}_{\mathrm{str}}$, and therefore
$\mathbf{K}'_{\mathrm{s}}=\mathbf{P}\mathbf{K}_{\mathrm{s}}$ and
$\mathbf{V}'_{\mathrm{s}}=\mathbf{P}\mathbf{V}_{\mathrm{s}}$.

Notice that the seed queries $\mathbf{Q}_{\mathrm{s}}$ are independent of the node ordering. Thus we have 
$\mathbf{Q}_{\mathrm{s}}(\mathbf{K}'_{\mathrm{s}})^{\top}
=\mathbf{Q}_{\mathrm{s}}\mathbf{K}_{\mathrm{s}}^{\top}\mathbf{P}^{\top}$.
Moreover, row-wise softmax commutes with permutations of its columns:
$\operatorname{softmax}_{\mathrm{row}}(\mathbf{Z}\mathbf{P}^{\top})
=\operatorname{softmax}_{\mathrm{row}}(\mathbf{Z})\mathbf{P}^{\top}$.
It follows that
\[
\begin{aligned}
\mathbf{H}'_{\mathrm{lat}}
&=
\operatorname{softmax}_{\mathrm{row}}\!\left(
\frac{\mathbf{Q}_{\mathrm{s}}\mathbf{K}_{\mathrm{s}}^{\top}\mathbf{P}^{\top}}{\sqrt{d}}
\right)\mathbf{P}\mathbf{V}_{\mathrm{s}} \\
&=
\operatorname{softmax}_{\mathrm{row}}\!\left(
\frac{\mathbf{Q}_{\mathrm{s}}\mathbf{K}_{\mathrm{s}}^{\top}}{\sqrt{d}}
\right)
\mathbf{P}^{\top}\mathbf{P}\mathbf{V}_{\mathrm{s}}
=
\mathbf{H}_{\mathrm{lat}}.
\end{aligned}
\]
Therefore, since $\mathbf{h}_{\mathrm{graph}}=\operatorname{vec}(\mathbf{H}_{\mathrm{lat}})$, we conclude that $\mathbf{h}'_{\mathrm{graph}}=\mathbf{h}_{\mathrm{graph}}$.
\end{proof}

The preceding lemma ensures that relabeling the graph does not change the input to the coefficient-generating hypernetwork. Consequently, the same hop-dependent propagation coefficients are produced under any node permutation. This allows permutation equivariance of the structural recurrence and feature-attention correction to be established jointly.

\vspace{0.5em}

\begin{theorem}[Permutation equivariance of the sequence extractor]
\label{thm:permutation-equivariance}
Let $\mathbf{P}\in\mathbb{R}^{n\times n}$ be any permutation matrix, and define $\mathbf{A}'=\mathbf{P}\mathbf{A}\mathbf{P}^{\top}$ and $\mathbf{H}'=\mathbf{P}\mathbf{H}$. Suppose the feature-attention maps are permutation equivariant, so that $\mathbf{Q}'_k=\mathbf{P}\mathbf{Q}_k$ and $\mathbf{K}'_k=\mathbf{P}\mathbf{K}_k$, and that $\mathrm{LayerNorm}$ is applied independently to each node with shared parameters. Then, for every $k\geq 0$, $\mathbf{U}'^{(k)}=\mathbf{P}\mathbf{U}^{(k)}$ and $\mathbf{S}'^{(k)}=\mathbf{P}\mathbf{S}^{(k)}$. Consequently,
\[
\mathrm{SEQ}_{\theta}(\mathbf{P}\mathbf{H},\mathbf{P}\mathbf{A}\mathbf{P}^{\top})
=
\mathbf{P}\,\mathrm{SEQ}_{\theta}(\mathbf{H},\mathbf{A}),
\]
where $\mathbf{P}$ acts on the node dimension of the hop sequence.
\end{theorem}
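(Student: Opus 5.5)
The plan is to invoke Lemma~\ref{lem:graph-summary-invariance} first, so that the hypernetwork inputs coincide under relabeling, and then run an induction on the hop index $k$ for the structural states. By the lemma, $\mathbf{h}'_{\mathrm{graph}}=\mathbf{h}_{\mathrm{graph}}$. The step embeddings $\mathbf{e}_k$ and the MLP weights do not depend on the node ordering. Hence Eq.~\eqref{eq:hyper-coeffs} produces identical coefficients $\boldsymbol{\alpha}_A^{(k)},\boldsymbol{\alpha}_D^{(k)},\boldsymbol{\alpha}_I^{(k)},\beta_k$ for $(\mathbf{H}',\mathbf{A}')$ and $(\mathbf{H},\mathbf{A})$. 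This is the only place where graph-dependent adaptivity could break symmetry, and the lemma removes it.

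Next I record the conjugation identities for the operators: $\mathbf{A}'=\mathbf{P}\mathbf{A}\mathbf{P}^{\top}$, $\mathbf{D}'=\mathbf{P}\mathbf{D}\mathbf{P}^{\top}$ (since degrees are row sums, $\mathbf{A}'\mathbf{1}=\mathbf{P}\mathbf{A}\mathbf{1}$), and $\mathbf{I}=\mathbf{P}\mathbf{I}\mathbf{P}^{\top}$. For the normalized path, $\mathbf{A}'_{\mathrm{rw}}$ and $\mathbf{I}-\mathbf{D}'^{-1}$ conjugate in the same way. Therefore each window operator satisfies $\mathbf{M}'_{k,j}=\mathbf{P}\mathbf{M}_{k,j}\mathbf{P}^{\top}$, where $\mathbf{M}_{k,j}=\alpha_A^{(k)}[j]\mathbf{A}+\alpha_D^{(k)}[j]\mathbf{D}+\alpha_I^{(k)}[j]\mathbf{I}$.

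I then use strong induction on $k$. The base cases are $\mathbf{U}'^{(0)}=\mathbf{H}'=\mathbf{P}\mathbf{H}$, and $\mathbf{U}'^{(i)}=\mathbf{0}=\mathbf{P}\mathbf{0}$ for $i<0$. For the inductive step,
\[
\mathbf{U}'^{(k)}=\sum_{j=1}^{M}\mathbf{P}\mathbf{M}_{k,j}\mathbf{P}^{\top}\mathbf{P}\mathbf{U}^{(k-j)}=\mathbf{P}\mathbf{U}^{(k)},
\]
using $\mathbf{P}^{\top}\mathbf{P}=\mathbf{I}$.

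For the output, the assumption $\mathbf{Q}'_k=\mathbf{P}\mathbf{Q}_k$, $\mathbf{K}'_k=\mathbf{P}\mathbf{K}_k$ gives $\mathbf{Q}'_k\mathbf{K}'^{\top}_k=\mathbf{P}\mathbf{Q}_k\mathbf{K}_k^{\top}\mathbf{P}^{\top}$. Row-wise softmax is equivariant under simultaneous row and column permutation: permuting rows just reorders outputs, and permuting columns commutes with softmax, as in the lemma's proof. Hence $\mathbf{F}'_k=\mathbf{P}\mathbf{F}_k\mathbf{P}^{\top}$, and so
\[
\mathbf{U}'^{(k)}+\beta_k\mathbf{F}'_k\mathbf{U}'^{(k)}=\mathbf{P}\bigl(\mathbf{U}^{(k)}+\beta_k\mathbf{F}_k\mathbf{U}^{(k)}\bigr).
\]
LayerNorm acts row-wise with shared parameters, so it commutes with row permutations. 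This gives $\mathbf{S}'^{(k)}=\mathbf{P}\mathbf{S}^{(k)}$. Stacking over $k=0,\dots,L-1$ yields the claimed identity for $\mathrm{SEQ}_\theta$ with $\mathbf{P}$ acting on the node axis.

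The main obstacle is conceptual rather than computational. The coefficients are data-dependent, so a naive argument treating them as fixed constants would be circular. This is exactly what Lemma~\ref{lem:graph-summary-invariance} resolves. The only other point needing care is the two-sided softmax identity $\mathrm{softmax}_{\mathrm{row}}(\mathbf{P}\mathbf{Z}\mathbf{P}^{\top})=\mathbf{P}\,\mathrm{softmax}_{\mathrm{row}}(\mathbf{Z})\mathbf{P}^{\top}$, which I would verify entrywise.
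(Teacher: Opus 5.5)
Your proposal is correct and follows essentially the same route as the paper. Like the paper, you invoke Lemma~\ref{lem:graph-summary-invariance} to get identical hypernetwork coefficients, induct on $k$ using $\mathbf{A}'=\mathbf{P}\mathbf{A}\mathbf{P}^{\top}$, $\mathbf{D}'=\mathbf{P}\mathbf{D}\mathbf{P}^{\top}$ and $\mathbf{P}^{\top}\mathbf{P}=\mathbf{I}$, and then use the two-sided row-softmax identity and row-wise LayerNorm to conclude $\mathbf{S}'^{(k)}=\mathbf{P}\mathbf{S}^{(k)}$. Your extra check that $\mathbf{A}_{\mathrm{rw}}$ and $\mathbf{I}-\mathbf{D}^{-1}$ conjugate the same way is a small but useful addition, since the normalized recurrence is the one actually used in the experiments.
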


\begin{proof}[Proof of Theorem \ref{thm:permutation-equivariance}]
By Lemma~\ref{lem:graph-summary-invariance}, $\mathbf{h}'_{\mathrm{graph}}=\mathbf{h}_{\mathrm{graph}}$. Since the hop embedding $\mathbf{e}_k$ is independent of the node ordering, Eq.~\eqref{eq:hyper-coeffs} therefore yields identical coefficients $\boldsymbol{\alpha}^{(k)}_A$, $\boldsymbol{\alpha}^{(k)}_D$, $\boldsymbol{\alpha}^{(k)}_I$, and $\beta_k$ on the original and relabeled graphs.

We first show $\mathbf{U}'^{(k)}=\mathbf{P}\mathbf{U}^{(k)}$ by induction on $k$. The claim holds at $k=0$ since $\mathbf{U}'^{(0)}=\mathbf{H}'=\mathbf{P}\mathbf{H}=\mathbf{P}\mathbf{U}^{(0)}$. Moreover, $\mathbf{D}'=\mathbf{P}\mathbf{D}\mathbf{P}^{\top}$. Assuming $\mathbf{U}'^{(k-j)}=\mathbf{P}\mathbf{U}^{(k-j)}$ for the preceding states in the recurrence, Eq.~\eqref{eq:shadow} gives
\[
\begin{aligned}
\mathbf{U}'^{(k)}
&=
\sum_{j=1}^{M}
\left(
\alpha_A^{(k)}[j]\mathbf{P}\mathbf{A}\mathbf{P}^{\top}
+
\alpha_D^{(k)}[j]\mathbf{P}\mathbf{D}\mathbf{P}^{\top}
+
\alpha_I^{(k)}[j]\mathbf{I}
\right)
\mathbf{P}\mathbf{U}^{(k-j)} \\
&=
\mathbf{P}
\sum_{j=1}^{M}
\left(
\alpha_A^{(k)}[j]\mathbf{A}
+
\alpha_D^{(k)}[j]\mathbf{D}
+
\alpha_I^{(k)}[j]\mathbf{I}
\right)
\mathbf{U}^{(k-j)} \\
&=
\mathbf{P}\mathbf{U}^{(k)}.
\end{aligned}
\]

Next, $\mathbf{Q}'_k=\mathbf{P}\mathbf{Q}_k$ and $\mathbf{K}'_k=\mathbf{P}\mathbf{K}_k$ imply
$\mathbf{F}'_k=\mathbf{P}\mathbf{F}_k\mathbf{P}^{\top}$, since row-wise softmax satisfies
$\operatorname{softmax}_{\mathrm{row}}(\mathbf{P}\mathbf{Z}\mathbf{P}^{\top})
=\mathbf{P}\operatorname{softmax}_{\mathrm{row}}(\mathbf{Z})\mathbf{P}^{\top}$.
Hence $\mathbf{U}'^{(k)}+\beta_k\mathbf{F}'_k\mathbf{U}'^{(k)}
=\mathbf{P}(\mathbf{U}^{(k)}+\beta_k\mathbf{F}_k\mathbf{U}^{(k)})$.
Finally, row-wise LayerNorm commutes with node permutations, so Eq.~\eqref{eq} yields $\mathbf{S}'^{(k)}=\mathbf{P}\mathbf{S}^{(k)}$. The result follows for every hop $k$.
\end{proof}

Theorem~\ref{thm:permutation-equivariance} shows that \textsc{HOPPER} preserves the defining symmetry of graph-structured data despite using graph-conditioned and hop-dependent propagation rules. In particular, learning the sequence extractor does not introduce a dependence on a canonical node ordering: relabeling the input graph simply relabels the node dimension of every extracted hop representation. This property is preserved by the downstream sequence-processing stage because the same sequence model is applied independently to each node. Combining node-level equivariance with a permutation-invariant graph readout therefore yields permutation invariance of the complete graph-level predictor.

\vspace{0.5em}

\begin{corollary}[Permutation invariance of graph-level predictions]
\label{cor:graph-prediction-invariance}
Suppose the downstream sequence-processing block applies the same map
$\Phi:\mathbb{R}^{L\times d}\rightarrow\mathbb{R}^{L\times d}$
independently to each node, and let $\rho$ be a permutation-invariant graph-level readout.
Let $\mathbf{h}_{\mathrm{graph}}(\mathbf{X},\mathbf{A})$ denote the resulting graph representation, and define the graph-level predictor by
$f(\mathbf{X},\mathbf{A})=g\!\left(\mathbf{h}_{\mathrm{graph}}(\mathbf{X},\mathbf{A})\right)$
for any prediction head $g$. Then, for every permutation matrix $\mathbf{P}$,
$f(\mathbf{P}\mathbf{X},\mathbf{P}\mathbf{A}\mathbf{P}^{\top})=f(\mathbf{X},\mathbf{A})$.
\end{corollary}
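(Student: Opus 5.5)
The plan is to reduce the corollary to the two permutation results already proved, Lemma~\ref{lem:graph-summary-invariance} and Theorem~\ref{thm:permutation-equivariance}, and then propagate equivariance through the rest of the pipeline in Eq.~\eqref{eq:pipeline}.

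Fix a permutation matrix $\mathbf{P}$. The input projection acts row-wise, so $\mathbf{H}'=(\mathbf{P}\mathbf{X})\mathbf{W}_{\mathrm{in}}=\mathbf{P}\mathbf{H}$. Theorem~\ref{thm:permutation-equivariance} then gives $\mathbf{S}'=\mathbf{P}\,\mathbf{S}$, with $\mathbf{P}$ acting on the node dimension of the $n\times L\times d$ tensor. Concretely, this means $\mathbf{S}'_{v,:,:}=\mathbf{S}_{\pi^{-1}(v),:,:}$, where $\pi$ is the node permutation encoded by $\mathbf{P}$.

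Next we pass through the sequence processor. By Eq.~\eqref{eq:sequence-processing}, the shared map $\Phi$ is applied to each node's sequence separately. Therefore
\[
\mathbf{Z}'_{v,:,:}=\Phi(\mathbf{S}'_{v,:,:})=\Phi(\mathbf{S}_{\pi^{-1}(v),:,:})=\mathbf{Z}_{\pi^{-1}(v),:,:},
\]
that is, $\mathbf{Z}'=\mathbf{P}\mathbf{Z}$. If there are $N_{\mathrm{blk}}$ stacked blocks, we repeat this argument block by block by induction. Any residual connections or per-node normalizations between blocks are also node-wise maps, so they preserve equivariance in the same way.

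The readout $\rho$ is permutation-invariant, so $\rho(\mathbf{P}\mathbf{Z})=\rho(\mathbf{Z})$. Hence the graph representation satisfies $\mathbf{h}_{\mathrm{graph}}(\mathbf{P}\mathbf{X},\mathbf{P}\mathbf{A}\mathbf{P}^\top)=\mathbf{h}_{\mathrm{graph}}(\mathbf{X},\mathbf{A})$. Applying the fixed head $g$ to equal arguments gives equal outputs, so $f(\mathbf{P}\mathbf{X},\mathbf{P}\mathbf{A}\mathbf{P}^\top)=f(\mathbf{X},\mathbf{A})$.

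The only step that needs care is bookkeeping rather than analysis. The corollary reuses the symbol $\mathbf{h}_{\mathrm{graph}}$ for the readout output, whereas earlier it denoted the hypernetwork summary. We must also check that the hypotheses of Theorem~\ref{thm:permutation-equivariance} hold in this setting. Equivariance of the attention maps follows from $\mathbf{Q}_k=\mathbf{H}\mathbf{W}_Q^{(k)}$, and likewise for $\mathbf{K}_k$, combined with $\mathbf{H}'=\mathbf{P}\mathbf{H}$. LayerNorm is applied per node with shared parameters. With these two facts in hand, the proof is a composition of equivariant maps followed by one invariant map.
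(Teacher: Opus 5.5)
Your proposal is correct and follows essentially the same route as the paper: Theorem~\ref{thm:permutation-equivariance} gives $\mathbf{S}'=\mathbf{P}\mathbf{S}$, the shared node-wise map $\Phi$ then gives $\mathbf{Z}'=\mathbf{P}\mathbf{Z}$, and invariance of $\rho$ followed by applying $g$ finishes the argument. The paper leaves several steps implicit that you spell out correctly: that $\mathbf{H}=\mathbf{X}\mathbf{W}_{\mathrm{in}}$ acts row-wise, so $\mathbf{H}'=\mathbf{P}\mathbf{H}$; that the attention and LayerNorm hypotheses of the theorem hold automatically; and the block-by-block treatment of the stack.
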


\begin{proof}[Proof of Corollary \ref{cor:graph-prediction-invariance}]
By Theorem~\ref{thm:permutation-equivariance}, relabeling the graph by $\mathbf{P}$ permutes the node dimension of the sequence representation by the same permutation. Since the same map $\Phi$ is applied independently to every node, the downstream node-level representation remains permutation equivariant. Hence, if $\mathbf{Z}$ and $\mathbf{Z}'$ denote the corresponding node-level outputs on $(\mathbf{X},\mathbf{A})$ and $(\mathbf{P}\mathbf{X},\mathbf{P}\mathbf{A}\mathbf{P}^{\top})$, respectively, then $\mathbf{Z}'=\mathbf{P}\mathbf{Z}$. Permutation invariance of $\rho$ therefore gives
$\mathbf{h}_{\mathrm{graph}}(\mathbf{P}\mathbf{X},\mathbf{P}\mathbf{A}\mathbf{P}^{\top})
=\rho(\mathbf{Z}')
=\rho(\mathbf{P}\mathbf{Z})
=\rho(\mathbf{Z})
=\mathbf{h}_{\mathrm{graph}}(\mathbf{X},\mathbf{A})$.
Applying the prediction head $g$ to both sides yields
$f(\mathbf{P}\mathbf{X},\mathbf{P}\mathbf{A}\mathbf{P}^{\top})
=f(\mathbf{X},\mathbf{A})$.
\end{proof}

As a consequence, the graph-level prediction depends only on the underlying attributed graph and not on the arbitrary ordering of its nodes. In particular, isomorphic labeled representations of the same graph necessarily receive identical graph-level predictions.

\subsection{Graph-Adaptive Polynomial Representations and Recovery of Classical Propagation Schemes in \textsc{HOPPER}}

We next characterize the structural propagation family induced by the learnable recurrence. Although the recurrence coefficients are generated adaptively from the input graph and node features, conditioning on these coefficients shows that each structural state is obtained by applying a finite graph polynomial to the input representation. This characterization shows that the structural state passes information over at most $k$ graph hops and the manner in which \textsc{HOPPER} generalizes fixed graph propagation operators.

\vspace{0.5em}

\begin{theorem}[Structural state has an adaptive polynomial structure]
\label{thm:adaptive-polynomial-locality}
Fix a graph $\mathcal{G}=(\mathcal{V},\mathcal{E})$ and node representations $\mathbf{H}$, and let
$\{\boldsymbol{\alpha}_A^{(k)},\boldsymbol{\alpha}_D^{(k)},\boldsymbol{\alpha}_I^{(k)}\}_{k=1}^{L-1}$
denote the coefficients generated by the hypernetwork. Define $\boldsymbol{\Gamma}^{(0)}=\mathbf{I}$, $\boldsymbol{\Gamma}^{(i)}=\mathbf{0}$ for $i<0$, and, for $k\geq 1$,
\[
\boldsymbol{\Gamma}^{(k)}
=
\sum_{j=1}^{M}
\left(
\alpha_A^{(k)}[j]\mathbf{A}
+
\alpha_D^{(k)}[j]\mathbf{D}
+
\alpha_I^{(k)}[j]\mathbf{I}
\right)
\boldsymbol{\Gamma}^{(k-j)}.
\]
Then $\mathbf{U}^{(k)}=\boldsymbol{\Gamma}^{(k)}\mathbf{H}$ for every $k\geq 0$. Conditional on the generated coefficients, $\boldsymbol{\Gamma}^{(k)}$ is a noncommutative polynomial in $\mathbf{A}$ and $\mathbf{D}$ of degree at most $k$. Moreover, if $d_{\mathcal{G}}(u,v)>k$, then $[\boldsymbol{\Gamma}^{(k)}]_{uv}=0$. Consequently, the $k$-th structural state satisfies
\[
\mathbf{U}^{(k)}_u
=
\sum_{v:\,d_{\mathcal{G}}(u,v)\leq k}
[\boldsymbol{\Gamma}^{(k)}]_{uv}\mathbf{H}_v.
\]
Thus, conditional on the hypernetwork coefficients, $\mathbf{U}^{(k)}$ is linear in $\mathbf{H}$ and directly propagates information over at most $k$ graph hops. Since the coefficients themselves depend on $\mathbf{h}_{\mathrm{graph}}$, the overall map $\mathbf{H}\mapsto\mathbf{U}^{(k)}$ need not be linear.
\end{theorem}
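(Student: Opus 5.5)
The plan is to prove everything by strong induction on $k$, treating the hypernetwork coefficients as fixed scalars throughout. This is legitimate because, once $\mathbf{h}_{\mathrm{graph}}$ and the hop embeddings are given, Eq.~\eqref{eq:shadow} is a linear recurrence with constant matrix coefficients. I would establish three statements in sequence, each reusing the same induction skeleton with base cases $k\le 0$.

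\emph{Step 1: representation.} We have $\mathbf{U}^{(0)}=\mathbf{H}=\boldsymbol{\Gamma}^{(0)}\mathbf{H}$, and $\mathbf{U}^{(i)}=\mathbf{0}=\boldsymbol{\Gamma}^{(i)}\mathbf{H}$ for $i<0$. Assume $\mathbf{U}^{(k-j)}=\boldsymbol{\Gamma}^{(k-j)}\mathbf{H}$ for $j=1,\dots,M$. Substituting into Eq.~\eqref{eq:shadow} and factoring $\mathbf{H}$ on the right gives $\mathbf{U}^{(k)}=\boldsymbol{\Gamma}^{(k)}\mathbf{H}$. Linearity in $\mathbf{H}$ follows immediately.

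\emph{Step 2: degree.} Each factor $\alpha_A\mathbf{A}+\alpha_D\mathbf{D}+\alpha_I\mathbf{I}$ is a noncommutative polynomial of degree at most $1$ in $(\mathbf{A},\mathbf{D})$. By hypothesis, $\boldsymbol{\Gamma}^{(k-j)}$ has degree at most $k-j\le k-1$. The product therefore has degree at most $k-j+1\le k$, and a finite sum preserves this bound. Noncommutativity matters only in that products must be kept as words in $\mathbf{A}$ and $\mathbf{D}$, since $\mathbf{A}\mathbf{D}\neq\mathbf{D}\mathbf{A}$ in general. The zero matrices for negative indices are the zero polynomial and need no separate treatment.

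\emph{Step 3: locality.} This is the only step requiring a small idea. I would prove the stronger claim that $[\boldsymbol{\Gamma}^{(k)}]_{uv}=0$ whenever $d_{\mathcal{G}}(u,v)>k$, for all $k$ including $k\le 0$. For $k=0$ this holds because $\mathbf{I}$ is diagonal; for $k<0$ the matrix is zero. For the inductive step, write
\[
[\boldsymbol{\Gamma}^{(k)}]_{uv}=\sum_{j=1}^{M}\sum_{w}\big(\alpha_A^{(k)}[j]\mathbf{A}_{uw}+(\alpha_D^{(k)}[j]\mathbf{D}_{uw}+\alpha_I^{(k)}[j]\mathbf{I}_{uw})\big)[\boldsymbol{\Gamma}^{(k-j)}]_{wv}.
\]
A term can be nonzero only if $w=u$ or $w\in\mathcal{N}(u)$, since $\mathbf{D}$ and $\mathbf{I}$ are diagonal and $\mathbf{A}$ is supported on edges. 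In either case $d(u,w)\le 1$. By the triangle inequality, $d(w,v)\ge d(u,v)-1>k-1\ge k-j$, so the induction hypothesis gives $[\boldsymbol{\Gamma}^{(k-j)}]_{wv}=0$.

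The row formula for $\mathbf{U}^{(k)}_u$ then follows by expanding $(\boldsymbol{\Gamma}^{(k)}\mathbf{H})_u=\sum_v[\boldsymbol{\Gamma}^{(k)}]_{uv}\mathbf{H}_v$ and dropping the vanishing terms. The final remark needs no proof beyond noting that the coefficients are functions of $\mathbf{h}_{\mathrm{graph}}$, which depends nonlinearly on $\mathbf{H}$ through the softmax in Eq.~\eqref{eq:pool-attention}. I would not attempt to exhibit an explicit nonlinearity, since the statement only says the map \emph{need not} be linear.

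I expect no real obstacle. The only point requiring care is using strong induction, because the recurrence reaches back up to $M$ steps, and handling the $i<0$ conventions consistently.
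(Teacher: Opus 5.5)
Your proposal is correct and follows the paper's proof essentially step for step: the same strong induction giving $\mathbf{U}^{(k)}=\boldsymbol{\Gamma}^{(k)}\mathbf{H}$, the same degree count $(k-j)+1\le k$ using $j\ge 1$, and the same support argument (diagonal $\mathbf{D},\mathbf{I}$ preserve support while $\mathbf{A}$ extends it by at most one hop), which you make more explicit through the entrywise expansion and the triangle inequality. You read the closing remark as ``need not be linear,'' which is the intended meaning; the paper's proof text says ``not necessarily nonlinear,'' which is a slip.
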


\begin{proof}[Proof of Theorem \ref{thm:adaptive-polynomial-locality}]
We first show that $\mathbf{U}^{(k)}=\boldsymbol{\Gamma}^{(k)}\mathbf{H}$. The claim is immediate for $k=0$, since $\mathbf{U}^{(0)}=\mathbf{H}=\boldsymbol{\Gamma}^{(0)}\mathbf{H}$, and both $\mathbf{U}^{(i)}$ and $\boldsymbol{\Gamma}^{(i)}$ vanish for $i<0$. Suppose the claim holds for all indices smaller than $k$. Applying the induction hypothesis to Eq.~\eqref{eq:shadow} gives
\[
\begin{aligned}
\mathbf{U}^{(k)}
&=
\sum_{j=1}^{M}
\left(
\alpha_A^{(k)}[j]\mathbf{A}
+
\alpha_D^{(k)}[j]\mathbf{D}
+
\alpha_I^{(k)}[j]\mathbf{I}
\right)
\boldsymbol{\Gamma}^{(k-j)}\mathbf{H} \\
&=
\boldsymbol{\Gamma}^{(k)}\mathbf{H}.
\end{aligned}
\]
The same recursion shows inductively that $\boldsymbol{\Gamma}^{(k)}$ is a noncommutative polynomial in $\mathbf{A}$ and $\mathbf{D}$. Indeed, $\boldsymbol{\Gamma}^{(0)}=\mathbf{I}$ has degree zero, and multiplying any term of $\boldsymbol{\Gamma}^{(k-j)}$ by $\mathbf{A}$ or $\mathbf{D}$ increases its degree by at most one. Since $j\geq 1$, every resulting term has degree at most $(k-j)+1\leq k$.

It remains to establish the locality claim. We proceed again by induction. For $k=0$, $\boldsymbol{\Gamma}^{(0)}=\mathbf{I}$, so $[\boldsymbol{\Gamma}^{(0)}]_{uv}=0$ whenever $u\neq v$. Suppose that $[\boldsymbol{\Gamma}^{(r)}]_{uv}=0$ whenever $d_{\mathcal{G}}(u,v)>r$ for every $r<k$. Since $\mathbf{D}$ and $\mathbf{I}$ are diagonal, left multiplication by either operator does not enlarge the support of $\boldsymbol{\Gamma}^{(k-j)}$. Left multiplication by $\mathbf{A}$ can enlarge its support by at most one graph hop. Hence each term in the recursion for $\boldsymbol{\Gamma}^{(k)}$ can be nonzero at $(u,v)$ only if
$d_{\mathcal{G}}(u,v)\leq (k-j)+1\leq k$.
Therefore $[\boldsymbol{\Gamma}^{(k)}]_{uv}=0$ whenever $d_{\mathcal{G}}(u,v)>k$. 

Using $\mathbf{U}^{(k)}=\boldsymbol{\Gamma}^{(k)}\mathbf{H}$ and the support property above gives
$\mathbf{U}^{(k)}_u = \sum_{v:\,d_{\mathcal{G}}(u,v)\leq k}
[\boldsymbol{\Gamma}^{(k)}]_{uv}\mathbf{H}_v$.
For fixed hypernetwork coefficients, $\boldsymbol{\Gamma}^{(k)}$ is independent of $\mathbf{H}$, so the resulting map is linear in $\mathbf{H}$. In HOPPER, however, these coefficients are generated from $\mathbf{h}_{\mathrm{graph}}$, which itself depends on $\mathbf{H}$. Hence the unconditional map $\mathbf{H}\mapsto\mathbf{U}^{(k)}$ is not necessarily nonlinear.
\end{proof}

Theorem~\ref{thm:adaptive-polynomial-locality} separates two complementary sources of adaptivity in \textsc{HOPPER}. Conditional on the generated coefficients, the $k$-th structural state is a linear graph polynomial whose direct dependence is restricted to the $k$-hop neighborhood. Across different graph-feature inputs, however, the coefficients of this polynomial may change through $\mathbf{h}_{\mathrm{graph}}$. Thus, \textsc{HOPPER} retains the locality of finite-hop graph propagation while allowing the propagation rule itself to adapt to the input graph, node features, and information depth.

\vspace{0.5em}

\begin{theorem}[Containment of fixed LGSM extractors and stable Chebyshev propagation]
\label{thm:fixed-extractor-containment}
Suppose $M\geq 2$ and the structural recurrence is instantiated with $\{\mathbf{S},\mathbf{A},\mathbf{D},\mathbf{I}\}$, where
\[
\mathbf{U}^{(k)}
=
\sum_{j=1}^{M}
\left(
\alpha_S^{(k)}[j]\mathbf{S}
+
\alpha_A^{(k)}[j]\mathbf{A}
+
\alpha_D^{(k)}[j]\mathbf{D}
+
\alpha_I^{(k)}[j]\mathbf{I}
\right)\mathbf{U}^{(k-j)},
\;\;
\mathbf{U}^{(0)}=\mathbf{H},\;
\mathbf{U}^{(i)}=\mathbf{0}\ \text{for }i<0.
\]
Then the following propagation sequences are recovered as special cases.

\begin{enumerate}
    \item[(i)] \textbf{Normalized-adjacency propagation.}
    Setting $\alpha_S^{(k)}[1]=1$ for every $k\geq 1$ and all remaining coefficients to zero gives $\mathbf{U}^{(k)}=\mathbf{S}^{k}\mathbf{H}$. In particular, taking $\mathbf{S}=\mathbf{A}_{\mathrm{sym}}$ recovers the adjacency-based LGSM sequence in Eq.~\eqref{eq:lgsm-adj}.

    \item[(ii)] \textbf{Non-backtracking propagation.}
    Setting $\alpha_A^{(1)}[1]=1$, $\alpha_A^{(2)}[1]=1$, and $\alpha_D^{(2)}[2]=-1$, and for every $k\geq 3$ setting $\alpha_A^{(k)}[1]=1$, $\alpha_D^{(k)}[2]=-1$, and $\alpha_I^{(k)}[2]=1$, with all remaining coefficients equal to zero, gives $\mathbf{U}^{(k)}=\mathbf{B}^{(k)}\mathbf{H}$, where $\{\mathbf{B}^{(k)}\}_{k\geq 0}$ is the non-backtracking recurrence in Eq.~\eqref{eq:nbt}.

    \item[(iii)] \textbf{Chebyshev propagation.}
    Suppose additionally that $\mathbf{S}$ is symmetric and $\sigma(\mathbf{S})\subseteq[-1,1]$. Setting $\alpha_S^{(1)}[1]=1$, and for every $k\geq 2$ setting $\alpha_S^{(k)}[1]=2$ and $\alpha_I^{(k)}[2]=-1$, with all remaining coefficients equal to zero, gives
    $\mathbf{U}^{(k)}=T_k(\mathbf{S})\mathbf{H}$,
    where $T_k$ is the degree-$k$ Chebyshev polynomial of the first kind. Moreover, $\|T_k(\mathbf{S})\|_2\leq 1$ and hence $\|\mathbf{U}^{(k)}\|_F\leq\|\mathbf{H}\|_F$ for every $k\geq 0$.
\end{enumerate}
\end{theorem}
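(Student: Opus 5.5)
The plan is to handle each of the three parts by strong induction on $k$. In each case I will plug the prescribed coefficients into the generalized recurrence, observe that only the terms with $j=1$ and $j=2$ survive, and compare the resulting two-term recurrence with the target one. The hypothesis $M\geq 2$ is exactly what allows a $j=2$ term.

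\textbf{Part (i).} With only $\alpha_S^{(k)}[1]=1$ nonzero, the recurrence reduces to $\mathbf{U}^{(k)}=\mathbf{S}\mathbf{U}^{(k-1)}$. Since $\mathbf{U}^{(0)}=\mathbf{H}$, induction gives $\mathbf{U}^{(k)}=\mathbf{S}^k\mathbf{H}$. Substituting $\mathbf{S}=\mathbf{A}_{\mathrm{sym}}$ recovers Eq.~\eqref{eq:lgsm-adj}.

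\textbf{Part (ii).} Here I check the base cases directly, using $\mathbf{U}^{(-1)}=\mathbf{0}$:
\begin{itemize}
\item at $k=1$, $\mathbf{U}^{(1)}=\mathbf{A}\mathbf{H}=\mathbf{B}^{(1)}\mathbf{H}$;
\item at $k=2$, $\mathbf{U}^{(2)}=\mathbf{A}\mathbf{U}^{(1)}-\mathbf{D}\mathbf{U}^{(0)}=(\mathbf{A}^2-\mathbf{D})\mathbf{H}=\mathbf{B}^{(2)}\mathbf{H}$.
\end{itemize}
For $k\geq 3$ the prescribed coefficients give
\[
\mathbf{U}^{(k)}=\mathbf{A}\mathbf{U}^{(k-1)}+(-\mathbf{D}+\mathbf{I})\mathbf{U}^{(k-2)}
=\mathbf{A}\mathbf{U}^{(k-1)}-(\mathbf{D}-\mathbf{I})\mathbf{U}^{(k-2)} .
\]
The induction hypothesis $\mathbf{U}^{(i)}=\mathbf{B}^{(i)}\mathbf{H}$ for $i<k$ then turns this into exactly the recurrence of Eq.~\eqref{eq:nbt} applied to $\mathbf{H}$.

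\textbf{Part (iii).} The algebra again follows the same pattern:
\begin{itemize}
\item $\mathbf{U}^{(0)}=\mathbf{H}=T_0(\mathbf{S})\mathbf{H}$;
\item $\mathbf{U}^{(1)}=\mathbf{S}\mathbf{H}=T_1(\mathbf{S})\mathbf{H}$;
\item for $k\geq 2$, $\mathbf{U}^{(k)}=2\mathbf{S}\mathbf{U}^{(k-1)}-\mathbf{U}^{(k-2)}$.
\end{itemize}
This is the three-term recurrence $T_k(x)=2xT_{k-1}(x)-T_{k-2}(x)$ evaluated at the matrix $\mathbf{S}$. That evaluation is legitimate because polynomials in a single matrix commute and evaluation is a ring homomorphism $\mathbb{R}[x]\to\mathbb{R}^{n\times n}$. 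Hence $\mathbf{U}^{(k)}=T_k(\mathbf{S})\mathbf{H}$.

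The step with real content is the norm bound. Since $\mathbf{S}$ is symmetric, I write $\mathbf{S}=\mathbf{Q}\boldsymbol{\Lambda}\mathbf{Q}^\top$ with $\mathbf{Q}$ orthogonal. Then $T_k(\mathbf{S})=\mathbf{Q}\,T_k(\boldsymbol{\Lambda})\mathbf{Q}^\top$ is symmetric, so its spectral norm equals $\max_i|T_k(\lambda_i)|$. On $[-1,1]$ we have $T_k(\cos\theta)=\cos(k\theta)$, and since $\sigma(\mathbf{S})\subseteq[-1,1]$ this gives $|T_k(\lambda_i)|\leq 1$, hence $\|T_k(\mathbf{S})\|_2\leq 1$. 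The Frobenius bound then comes from the standard inequality $\|\mathbf{M}\mathbf{H}\|_F\leq\|\mathbf{M}\|_2\|\mathbf{H}\|_F$, which one checks column by column.

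For the final remark, take $\mathbf{S}=\mathbf{A}_{\mathrm{sym}}$. It is symmetric, and its spectrum lies in $[-1,1]$ because it is similar to $\mathbf{A}_{\mathrm{rw}}$, whose rows are stochastic, so its eigenvalues are bounded by $1$ in modulus. Isolated nodes would need a convention for $\mathbf{D}^{-1/2}$; I would assume they are absent.

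No step is genuinely hard. The only care required is indexing the base cases correctly, making sure the vanishing $\mathbf{U}^{(i<0)}$ terms drop out at $k=1,2$, so that the coefficient schedule in (ii), which changes at $k=3$, matches $\mathbf{B}^{(2)}=\mathbf{A}^2-\mathbf{D}$ rather than $\mathbf{A}^2-(\mathbf{D}-\mathbf{I})$.
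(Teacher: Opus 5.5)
Your proposal is correct and follows the paper's proof essentially step for step: in each case you reduce the coefficient schedule to a one- or two-term recurrence, check the base cases, induct, and bound $\|T_k(\mathbf{S})\|_2$ using the spectral decomposition together with $T_k(\cos\theta)=\cos(k\theta)$. Your extra argument that $\sigma(\mathbf{A}_{\mathrm{sym}})\subseteq[-1,1]$, via similarity to the row-stochastic $\mathbf{A}_{\mathrm{rw}}$ and assuming no isolated nodes, is correct and fills in a point the paper leaves implicit.
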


\begin{proof}[Proof of Theorem \ref{thm:fixed-extractor-containment}]
For part (i), the specified coefficients reduce the structural recurrence to $\mathbf{U}^{(k)}=\mathbf{S}\mathbf{U}^{(k-1)}$. Since $\mathbf{U}^{(0)}=\mathbf{H}$, induction immediately gives $\mathbf{U}^{(k)}=\mathbf{S}^{k}\mathbf{H}$ for every $k\geq 0$.

For part (ii), the stated coefficients give $\mathbf{U}^{(1)}=\mathbf{A}\mathbf{H}$ and
$\mathbf{U}^{(2)}=\mathbf{A}\mathbf{U}^{(1)}-\mathbf{D}\mathbf{U}^{(0)}
=(\mathbf{A}^2-\mathbf{D})\mathbf{H}$.
For every $k\geq 3$, the recurrence becomes
\[
\mathbf{U}^{(k)}
=
\mathbf{A}\mathbf{U}^{(k-1)}
-
(\mathbf{D}-\mathbf{I})\mathbf{U}^{(k-2)}.
\]
Thus, if $\mathbf{U}^{(k-1)}=\mathbf{B}^{(k-1)}\mathbf{H}$ and
$\mathbf{U}^{(k-2)}=\mathbf{B}^{(k-2)}\mathbf{H}$, then
\[
\mathbf{U}^{(k)}
=
\left(
\mathbf{A}\mathbf{B}^{(k-1)}
-
(\mathbf{D}-\mathbf{I})\mathbf{B}^{(k-2)}
\right)\mathbf{H}
=
\mathbf{B}^{(k)}\mathbf{H}.
\]
The result follows by induction.

For part (iii), the specified coefficients yield $\mathbf{U}^{(0)}=\mathbf{H}$, $\mathbf{U}^{(1)}=\mathbf{S}\mathbf{H}$, and, for every $k\geq 2$,
$\mathbf{U}^{(k)}=2\mathbf{S}\mathbf{U}^{(k-1)}-\mathbf{U}^{(k-2)}$.
Since the Chebyshev polynomials satisfy $T_0(x)=1$, $T_1(x)=x$, and
$T_k(x)=2xT_{k-1}(x)-T_{k-2}(x)$, induction gives
$\mathbf{U}^{(k)}=T_k(\mathbf{S})\mathbf{H}$.

It remains to establish stability. Since $\mathbf{S}$ is symmetric, write
$\mathbf{S}=\mathbf{Q}\boldsymbol{\Lambda}\mathbf{Q}^{\top}$ with $\mathbf{Q}$ orthogonal and $\lambda_i\in[-1,1]$ for every eigenvalue $\lambda_i$. By the spectral theorem,
$T_k(\mathbf{S})=\mathbf{Q}T_k(\boldsymbol{\Lambda})\mathbf{Q}^{\top}$, and therefore
\[
\|T_k(\mathbf{S})\|_2
=
\max_i |T_k(\lambda_i)|
\leq 1,
\]
where the final inequality follows applying Lemma \ref{lem:chebyshev-bound} i.e. $T_k(\cos\theta)=\cos(k\theta)$
and $\lvert \cos(k\theta) \rvert \leq 1$ for $k \in \mathbb{Z}_{+}$. Consequently,
$\|\mathbf{U}^{(k)}\|_F
=\|T_k(\mathbf{S})\mathbf{H}\|_F
\leq\|T_k(\mathbf{S})\|_2\|\mathbf{H}\|_F
\leq\|\mathbf{H}\|_F$,
which proves the claim.
\end{proof}

Theorem~\ref{thm:fixed-extractor-containment} shows that the flexibility introduced by \textsc{HOPPER} does not come at the expense of the propagation mechanisms used by existing LGSMs: both adjacency-power and non-backtracking extraction remain exact special cases of the learned recurrence. At the same time, the recurrence admits propagation schemes unavailable to these fixed extractors. In particular, Chebyshev propagation remains uniformly non-expansive with information depth while producing qualitatively different graph-adaptive polynomial representations.

\subsection{Optimal Stable Long-Range Sensitivity in \textsc{HOPPER}}

We next study how strongly information from a distant node can influence a representation at the first information depth at which the two nodes can interact. A direct comparison of sensitivities is only meaningful under a stability constraint, since arbitrarily large Jacobians could otherwise be obtained simply by amplifying the propagation operator. We therefore restrict our attention to the following class of polynomial propagators:

\begin{definition}[Uniformly stable polynomial propagators]
\label{def:stable-polynomial-propagators}
For $r\geq 1$, define the class of degree-$r$ uniformly stable polynomial propagators by
\[
\mathcal{P}_r
:=
\left\{
p\in\mathbb{R}[x]:
\deg(p)\leq r
\;\text{and}\;
\|p\|_{L^\infty([-1,1])}
=
\sup_{x\in[-1,1]}|p(x)|
\leq 1
\right\}.
\]
\end{definition}

If $\mathbf{S}$ is symmetric with $\sigma(\mathbf{S})\subseteq[-1,1]$, then every $p\in\mathcal{P}_r$ satisfies $\|p(\mathbf{S})\|_2\leq 1$ by the spectral theorem. Thus, $\mathcal{P}_r$ consists of degree-$r$ polynomial propagators that are uniformly non-expansive over the spectral interval $[-1,1]$.

For nodes $u,v$ with $d_{\mathcal{G}}(u,v)=r$, all polynomial terms of degree strictly smaller than $r$ vanish in the $(u,v)$ entry. Consequently, the first-arrival sensitivity of a degree-$r$ polynomial propagator is determined entirely by its leading coefficient. The following classical extremal property of Chebyshev polynomials therefore identifies the largest sensitivity attainable within $\mathcal{P}_r$.

\vspace{0.5em}

\begin{lemma}[Extremal leading coefficient of Chebyshev polynomials]
\label{lem:chebyshev-leading-coefficient}
Let $p(x)=a_r x^r+\cdots+a_0\in\mathcal{P}_r$ for $r\geq 1$. Then $|a_r|\leq 2^{r-1}$. Moreover, equality is attained by the Chebyshev polynomial $T_r$.
\end{lemma}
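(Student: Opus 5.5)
The plan is to use the classical Chebyshev-node interpolation argument: expand $p$ in the Lagrange basis at the extremal points of $T_r$ and read off the leading coefficient. Let $x_i=\cos(i\pi/r)$ for $i=0,\ldots,r$. These are $r+1$ distinct points in $[-1,1]$, and $T_r(x_i)=\cos(i\pi)=(-1)^i$ by the identity $T_r(\cos\theta)=\cos(r\theta)$ (Lemma~\ref{lem:chebyshev-bound}). Since $\deg p\leq r$, Lagrange interpolation at these nodes is exact, and the coefficient of $x^r$ equals the divided difference
\[
a_r=\sum_{i=0}^{r}\frac{p(x_i)}{w_i},\qquad w_i=\prod_{j\neq i}(x_i-x_j).
\]
With $|p(x_i)|\leq 1$ this gives $|a_r|\leq\sum_i 1/|w_i|$.

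The key step is to show that $\sum_i 1/|w_i|=2^{r-1}$. The cleanest route avoids computing the $w_i$ explicitly. Applying the same formula to $T_r$ itself gives its leading coefficient $2^{r-1}$, which follows by induction from $T_k=2xT_{k-1}-T_{k-2}$ with $T_1=x$. So we get $2^{r-1}=\sum_i (-1)^i/w_i$.

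It then remains to check that $\operatorname{sign}(w_i)=(-1)^i$. The nodes are decreasing in $i$: $x_0>x_1>\cdots>x_r$. Hence the factors $x_i-x_j$ with $j<i$ are negative, and there are exactly $i$ of them, while all other factors are positive. So every term $(-1)^i/w_i$ is positive, and $\sum_i (-1)^i/w_i=\sum_i 1/|w_i|$. Combining,
\[
|a_r|\leq\sum_i \frac{1}{|w_i|}=2^{r-1}.
\]

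For the equality case, we need $T_r\in\mathcal{P}_r$. It has degree $r$ and satisfies $|T_r(\cos\theta)|=|\cos r\theta|\leq 1$ on $[-1,1]$, so it belongs to the class and attains the bound.

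The main obstacle is the sign bookkeeping for $w_i$, and making sure the nodes are distinct, which holds for $r\geq1$. Everything else is routine. As an alternative route, one could argue by contradiction: if $|a_r|>2^{r-1}$, then $q=(2^{r-1}/a_r)p-T_r$ alternates in sign at the $r+1$ nodes. That forces $r$ roots, yet $q$ has degree less than $r$ and is not identically zero, which is impossible. I would use the interpolation identity as the primary argument.
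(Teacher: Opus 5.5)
Your proof is correct, but your main argument takes a different route from the paper's.

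The paper argues by contradiction. Assuming (after a sign flip) that $a_r>2^{r-1}$, it rescales to $q=(2^{r-1}/a_r)p$. This $q$ has the same leading coefficient as $T_r$ but satisfies $|q|<1$ strictly. So $h=T_r-q$ has degree at most $r-1$ yet alternates in sign at the $r+1$ nodes $\cos(j\pi/r)$, and the intermediate value theorem gives $r$ roots. This is exactly the alternative you sketch at the end.

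Your main argument instead writes the leading coefficient exactly as the divided difference $a_r=\sum_i p(x_i)/w_i$ at the same nodes, then bounds it with the triangle inequality. You evaluate $\sum_i 1/|w_i|$ without computing any $w_i$, by applying the same identity to $T_r$ and using $\mathrm{sign}(w_i)=(-1)^i$. That sign count is right: exactly the $i$ factors with $j<i$ are negative.

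\textbf{What each route buys.} Your direct identity gives the equality case almost for free. Equality forces $p(x_i)=\epsilon(-1)^i$ for a fixed $\epsilon\in\{\pm1\}$, so $p=\epsilon T_r$ by uniqueness of interpolation at $r+1$ points. The paper's contradiction argument needs the rescaling to get strict alternation and does not by itself identify the extremizers. In return, the paper's route uses less machinery: only the intermediate value theorem and root counting, rather than the Lagrange/divided-difference formula.
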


\begin{proof}[Proof of Lemma \ref{lem:chebyshev-leading-coefficient}]
By Lemma~\ref{lem:chebyshev-bound}, $T_r\in\mathcal{P}_r$, and the Chebyshev recurrence shows that its leading coefficient is $2^{r-1}$. It remains to show that no polynomial in $\mathcal{P}_r$ has a larger leading coefficient.

Suppose, for contradiction, that $|a_r|>2^{r-1}$. Replacing $p$ by $-p$ if necessary, we may assume $a_r>2^{r-1}$, and define
$q(x)=(2^{r-1}/a_r)p(x)$. Then $q$ and $T_r$ have the same leading coefficient, while $\sup_{x\in[-1,1]}|q(x)|<1$. Hence $h=T_r-q$ has degree at most $r-1$.

For $j=0,\ldots,r$, let $x_j=\cos(j\pi/r)$. By Lemma~\ref{lem:chebyshev-bound}, $T_r(x_j)=(-1)^j$. Since $|q(x_j)|<1$, the values $h(x_j)$ alternate in sign. The intermediate value theorem therefore gives at least one root of $h$ between each pair of consecutive points $x_j$ and $x_{j+1}$, so $h$ has at least $r$ distinct roots. This contradicts $\deg(h)\leq r-1$. Thus $|a_r|\leq 2^{r-1}$, and equality is attained by $T_r$.
\end{proof}

Combining Lemma~\ref{lem:chebyshev-leading-coefficient}'s extremal property with graph-distance locality yields the following main result:

\vspace{0.5em}

\begin{theorem}[Optimal stable long-range sensitivity]
\label{thm:optimal-long-range-sensitivity}
Let $\mathbf{S}\in\mathbb{R}^{n\times n}$ be symmetric with $\sigma(\mathbf{S})\subseteq[-1,1]$, and suppose $\mathbf{S}_{uv}=0$ whenever $u\neq v$ and $\{u,v\}\notin\mathcal{E}$. Let $u,v\in\mathcal{V}$ satisfy $d_{\mathcal{G}}(u,v)=r\geq 1$. Then
\[
\sup_{p\in\mathcal{P}_r}
\left\|
\frac{\partial [p(\mathbf{S})\mathbf{H}]_u}
{\partial \mathbf{H}_v}
\right\|_2
=
2^{r-1}\left|[\mathbf{S}^r]_{uv}\right|,
\]
and the supremum is attained by the Chebyshev propagator $p=T_r$.

In particular,
$\left\|\partial [T_r(\mathbf{S})\mathbf{H}]_u/\partial\mathbf{H}_v\right\|_2
=2^{r-1}|[\mathbf{S}^r]_{uv}|$,
whereas standard power propagation satisfies
$\left\|\partial [\mathbf{S}^r\mathbf{H}]_u/\partial\mathbf{H}_v\right\|_2
=|[\mathbf{S}^r]_{uv}|$.
Hence, whenever $[\mathbf{S}^r]_{uv}\neq 0$, Chebyshev propagation achieves a factor $2^{r-1}$ improvement in first-arrival sensitivity over power propagation while remaining non-expansive, i.e., $\|T_r(\mathbf{S})\|_2\leq 1$.
\end{theorem}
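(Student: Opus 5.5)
The plan combines the locality argument of Theorem~\ref{thm:adaptive-polynomial-locality} with the extremal property in Lemma~\ref{lem:chebyshev-leading-coefficient}. First, the Jacobian has to be identified. For any fixed polynomial $p$, the map $\mathbf{H}\mapsto p(\mathbf{S})\mathbf{H}$ is linear and acts row-wise, so $[p(\mathbf{S})\mathbf{H}]_u=\sum_{w}[p(\mathbf{S})]_{uw}\mathbf{H}_w$. It follows that $\partial[p(\mathbf{S})\mathbf{H}]_u/\partial\mathbf{H}_v=[p(\mathbf{S})]_{uv}\mathbf{I}_d$, whose spectral norm is $|[p(\mathbf{S})]_{uv}|$. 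The problem therefore reduces to maximizing the scalar $|[p(\mathbf{S})]_{uv}|$ over $p\in\mathcal{P}_r$.

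Second, a support argument shows $[\mathbf{S}^k]_{uv}=0$ whenever $k<d_{\mathcal{G}}(u,v)=r$. The proof is by induction on $k$, exactly as in the locality part of Theorem~\ref{thm:adaptive-polynomial-locality}. The base case is $\mathbf{S}^0=\mathbf{I}$, whose off-diagonal entries vanish because $r\ge1$ forces $u\neq v$. For the inductive step, $[\mathbf{S}^{k}]_{uv}=\sum_w \mathbf{S}_{uw}[\mathbf{S}^{k-1}]_{wv}$, and $\mathbf{S}_{uw}\neq0$ only when $w=u$ or $w$ is adjacent to $u$, so $d(w,v)\ge d(u,v)-1$. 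Writing $p(x)=\sum_{k=0}^r a_kx^k$, every term below degree $r$ then drops out, which gives $[p(\mathbf{S})]_{uv}=a_r[\mathbf{S}^r]_{uv}$.

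Third, the two steps are combined. By Lemma~\ref{lem:chebyshev-leading-coefficient}, $|a_r|\le 2^{r-1}$ for every $p\in\mathcal{P}_r$, so the supremum is at most $2^{r-1}|[\mathbf{S}^r]_{uv}|$. The Chebyshev polynomial $T_r$ has leading coefficient exactly $2^{r-1}$ and lies in $\mathcal{P}_r$ by Lemma~\ref{lem:chebyshev-bound}, so it attains this value. For the power propagator $p(x)=x^r$, which also belongs to $\mathcal{P}_r$, the same reduction gives sensitivity $|[\mathbf{S}^r]_{uv}|$, and taking the ratio yields the factor $2^{r-1}$. Non-expansiveness $\|T_r(\mathbf{S})\|_2\le1$ was already shown in Theorem~\ref{thm:fixed-extractor-containment}(iii) through the spectral theorem.

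The only delicate point is the support lemma. Because $\mathbf{S}$ may have nonzero diagonal entries, the induction has to be run with the self-loop terms allowed: multiplying by $\mathbf{S}$ extends the support by at most one hop, not exactly one. The leading coefficient of $p$ may also be zero, meaning $\deg p<r$; this case is covered by the same identity with $a_r=0$. Everything else follows directly from the earlier lemmas.
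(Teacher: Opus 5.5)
Your proposal is correct and follows the paper's proof essentially step for step. Locality kills every term of degree below $r$ in the $(u,v)$ entry, the Jacobian reduces to $[p(\mathbf{S})]_{uv}\mathbf{I}_d$, and the extremal leading-coefficient lemma gives the bound, with equality at $T_r$. Your explicit treatment of possible nonzero diagonal entries of $\mathbf{S}$ in the support induction is a small but useful tightening of the paper's brief walk-counting remark.
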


\begin{proof}[Proof of Theorem \ref{thm:optimal-long-range-sensitivity}]
Let $p(x)=a_r x^r+\cdots+a_0\in\mathcal{P}_r$. Since $d_{\mathcal{G}}(u,v)=r$ and $\mathbf{S}$ is supported on the edges of $\mathcal{G}$, we have $[\mathbf{S}^j]_{uv}=0$ for every $0\leq j<r$. Indeed, a nonzero contribution to $[\mathbf{S}^j]_{uv}$ would imply a walk from $v$ to $u$ using at most $j$ graph edges, contradicting $d_{\mathcal{G}}(u,v)=r$. Therefore,
$[p(\mathbf{S})]_{uv}=a_r[\mathbf{S}^r]_{uv}$.

Writing $\mathbf{Z}=p(\mathbf{S})\mathbf{H}$ gives
$\mathbf{Z}_u=\sum_{w\in\mathcal{V}}[p(\mathbf{S})]_{uw}\mathbf{H}_w$, and hence
\[
\frac{\partial \mathbf{Z}_u}{\partial \mathbf{H}_v}
=
[p(\mathbf{S})]_{uv}\mathbf{I}_d
=
a_r[\mathbf{S}^r]_{uv}\mathbf{I}_d.
\]
It follows that
$\left\|\partial \mathbf{Z}_u/\partial\mathbf{H}_v\right\|_2
=|a_r||[\mathbf{S}^r]_{uv}|$.
By Lemma~\ref{lem:chebyshev-leading-coefficient}, $|a_r|\leq 2^{r-1}$, and therefore
\[
\left\|
\frac{\partial [p(\mathbf{S})\mathbf{H}]_u}
{\partial \mathbf{H}_v}
\right\|_2
\leq
2^{r-1}\left|[\mathbf{S}^r]_{uv}\right|.
\]
The Chebyshev polynomial $T_r$ belongs to $\mathcal{P}_r$ and has leading coefficient $2^{r-1}$, so it attains equality. This proves the optimality claim.

Finally, taking $p(x)=x^r$ gives standard power propagation and yields
$\left\|\partial [\mathbf{S}^r\mathbf{H}]_u/\partial\mathbf{H}_v\right\|_2
=|[\mathbf{S}^r]_{uv}|$.
Whenever $[\mathbf{S}^r]_{uv}\neq 0$, the ratio between Chebyshev and power-propagation sensitivities is therefore $2^{r-1}$. The non-expansiveness $\|T_r(\mathbf{S})\|_2\leq 1$ follows from Lemma~\ref{lem:chebyshev-bound}.
\end{proof}

Theorem~\ref{thm:optimal-long-range-sensitivity} gives a direct comparison between the fixed power propagation used by adjacency-based LGSM and a propagation rule used by \textsc{HOPPER}. At graph distance $r$, standard power propagation has first-arrival sensitivity $|[\mathbf{S}^r]_{uv}|$, whereas Chebyshev propagation amplifies this quantity by the factor $2^{r-1}$. Importantly, this is achieved without sacrificing spectral stability as both propagators remain non-expansive when $\sigma(\mathbf{S})\subseteq[-1,1]$. Moreover, the Chebyshev choice attains the largest possible first-arrival sensitivity over the entire class $\mathcal{P}_r$. Thus under uniform spectral stability and a fixed propagation degree, \textsc{HOPPER} can realize the strongest possible direct transmission of information between nodes first connected at hop $r$.

The next result applies this characterization to obtain corresponding guarantees for regular graphs.

\vspace{0.5em}

\begin{corollary}[Long-range sensitivity on regular graphs]
\label{cor:regular-graph-sensitivity}
Let $\mathcal{G}$ be a $q$-regular graph, so that $\mathbf{A}_{\mathrm{sym}}=q^{-1}\mathbf{A}$. Let $u,v\in\mathcal{V}$ satisfy $d_{\mathcal{G}}(u,v)=r$, and let $N_r(u,v)$ denote the number of shortest paths of length $r$ from $v$ to $u$. Then
\[
\left\|
\frac{\partial [\mathbf{A}_{\mathrm{sym}}^r\mathbf{H}]_u}
{\partial \mathbf{H}_v}
\right\|_2
=
\frac{N_r(u,v)}{q^r},
\qquad
\left\|
\frac{\partial [T_r(\mathbf{A}_{\mathrm{sym}})\mathbf{H}]_u}
{\partial \mathbf{H}_v}
\right\|_2
=
\frac{N_r(u,v)}{2}\left(\frac{2}{q}\right)^r.
\]
In particular, if $q=2$ and $N_r(u,v)=1$, then the sensitivity of power propagation is $2^{-r}$, whereas the sensitivity of Chebyshev propagation is $1/2$, independent of $r$.
\end{corollary}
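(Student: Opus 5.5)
The plan is to specialize Theorem~\ref{thm:optimal-long-range-sensitivity} to $\mathbf{S}=\mathbf{A}_{\mathrm{sym}}$. After that, the only remaining work is to evaluate the single entry $[\mathbf{A}_{\mathrm{sym}}^r]_{uv}$ combinatorially on a regular graph.

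First I will check that the hypotheses of Theorem~\ref{thm:optimal-long-range-sensitivity} hold. On a $q$-regular graph $\mathbf{D}=q\mathbf{I}$, so $\mathbf{A}_{\mathrm{sym}}=\mathbf{D}^{-1/2}\mathbf{A}\mathbf{D}^{-1/2}=q^{-1}\mathbf{A}$. This matrix is symmetric and is supported on edges. Its spectrum lies in $[-1,1]$ because every row has absolute sum $1$, so Gershgorin (or $\|\mathbf{A}\|_2\le\|\mathbf{A}\|_\infty=q$) bounds its eigenvalues. I will assume $q\ge1$ so the normalization is defined, which is automatic whenever $d_{\mathcal G}(u,v)=r\geq 1$ is finite. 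Theorem~\ref{thm:optimal-long-range-sensitivity} (its proof in fact gives the Jacobian $[p(\mathbf{S})]_{uv}\mathbf{I}_d$ explicitly) then yields
\[
\left\|\partial[\mathbf{A}_{\mathrm{sym}}^r\mathbf{H}]_u/\partial\mathbf{H}_v\right\|_2=|[\mathbf{A}_{\mathrm{sym}}^r]_{uv}|,
\qquad
\left\|\partial[T_r(\mathbf{A}_{\mathrm{sym}})\mathbf{H}]_u/\partial\mathbf{H}_v\right\|_2=2^{r-1}|[\mathbf{A}_{\mathrm{sym}}^r]_{uv}|.
\]

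Next I compute the entry. Since $\mathbf{A}_{\mathrm{sym}}^r=q^{-r}\mathbf{A}^r$, and $[\mathbf{A}^r]_{uv}$ counts the walks of length $r$ between $v$ and $u$, every such walk is a shortest path when $d_{\mathcal G}(u,v)=r$. The reason is that a walk of length exactly $r$ joining two vertices at distance $r$ cannot repeat a vertex: otherwise we could shortcut it to obtain a walk of length less than $r$, a contradiction. Hence $[\mathbf{A}^r]_{uv}=N_r(u,v)\ge0$. Substituting gives $N_r(u,v)/q^r$ for power propagation and $2^{r-1}N_r(u,v)/q^r=\tfrac{N_r(u,v)}{2}(2/q)^r$ for Chebyshev propagation. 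Setting $q=2$ and $N_r=1$ then yields $2^{-r}$ and $1/2$ respectively.

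The main obstacle is the walk-to-shortest-path identification. Everything else is direct substitution, but this step relies on the distance being exactly $r$, so I will argue it carefully.
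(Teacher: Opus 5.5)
Your proposal is correct and matches the paper's proof: specialize Theorem~\ref{thm:optimal-long-range-sensitivity} to $\mathbf{S}=\mathbf{A}_{\mathrm{sym}}=q^{-1}\mathbf{A}$, use $[\mathbf{A}^r]_{uv}=N_r(u,v)$ because every length-$r$ walk between vertices at distance $r$ is a shortest path, and substitute. The paper leaves two steps implicit that you spell out: the Gershgorin check that $\sigma(\mathbf{A}_{\mathrm{sym}})\subseteq[-1,1]$, and the shortcut argument that such walks cannot repeat vertices.
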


\begin{proof}[Proof of Corollary \ref{cor:regular-graph-sensitivity}]
Since $\mathbf{A}_{\mathrm{sym}}=q^{-1}\mathbf{A}$, we have
$[\mathbf{A}_{\mathrm{sym}}^r]_{uv}=q^{-r}[\mathbf{A}^r]_{uv}$.
The entry $[\mathbf{A}^r]_{uv}$ counts length-$r$ walks from $v$ to $u$. Since $d_{\mathcal{G}}(u,v)=r$, every such walk is a shortest path, and therefore $[\mathbf{A}^r]_{uv}=N_r(u,v)$. The result now follows directly from Theorem~\ref{thm:optimal-long-range-sensitivity}.
\end{proof}

Corollary~\ref{cor:regular-graph-sensitivity} makes this improvement particularly clear when $q=2$. Along a unique shortest path, the sensitivity of adjacency-power propagation decays exponentially as $2^{-r}$, whereas the Chebyshev sensitivity remains equal to $1/2$ for every distance $r$. Thus, the increased graph-adaptive polynomial expressivity of \textsc{HOPPER} can substantially mitigate over-squashing relative to LGSM.

\subsection{Avoidance of Sequence-Level Spectral Over-Smoothing in \textsc{HOPPER}}

We next study over-smoothing across the full propagation sequence. Under repeated adjacency propagation, spectral components associated with eigenvalues of magnitude strictly smaller than one decay with depth, causing deep representations to concentrate in a stationary eigenspace. This spectral collapse, known as oversmoothing, causes node representations to progressively lose nonstationary information that distinguishes different regions of the graph. Since LGSM passes all intermediate propagation states to the downstream sequence model, we therefore ask whether this informative nonstationary spectral information remains available across the sequence as the maximum propagation depth increases.

The following elementary property of Chebyshev polynomials will be used to characterize their average spectral response across information depth.

\vspace{0.5em}

\begin{lemma}[Average energy of Chebyshev modes]
\label{lem:chebyshev-average-energy}
For every $\lambda\in(-1,1)$,
\[
\lim_{L\rightarrow\infty}
\frac{1}{L}\sum_{k=0}^{L-1}T_k(\lambda)^2
=
\frac{1}{2}.
\]
\end{lemma}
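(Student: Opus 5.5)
The plan is to use the trigonometric representation of Chebyshev polynomials, which the paper already relies on through Lemma~\ref{lem:chebyshev-bound}: $T_k(\cos\theta)=\cos(k\theta)$. For $\lambda\in(-1,1)$ we write $\lambda=\cos\theta$ with the unique $\theta\in(0,\pi)$. The open interval matters, since it excludes $\theta=0$ and $\theta=\pi$, where $T_k(\lambda)^2\equiv 1$ and the average would be $1$ instead of $1/2$. With this substitution,
\[
T_k(\lambda)^2=\cos^2(k\theta)=\tfrac12+\tfrac12\cos(2k\theta),
\]
and therefore
\[
\frac{1}{L}\sum_{k=0}^{L-1}T_k(\lambda)^2=\frac12+\frac{1}{2L}\sum_{k=0}^{L-1}\cos(2k\theta).
\]
The whole proof then comes down to showing that the oscillatory sum $\sum_{k=0}^{L-1}\cos(2k\theta)$ stays bounded uniformly in $L$. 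Dividing a bounded quantity by $2L$ sends it to $0$.

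The key step is to evaluate this sum as the real part of a geometric series. Set $z=e^{2i\theta}$. Because $\theta\in(0,\pi)$, we have $2\theta\in(0,2\pi)$, so $z\neq 1$. Then
\[
\sum_{k=0}^{L-1}z^k=\frac{z^L-1}{z-1},
\]
and taking absolute values gives
\[
\Bigl|\sum_{k=0}^{L-1}\cos(2k\theta)\Bigr|\le\frac{2}{|z-1|}=\frac{1}{|\sin\theta|}.
\]
Equivalently, one can use the Dirichlet-kernel identity $\sum_{k=0}^{L-1}\cos(2k\theta)=\frac{\sin(L\theta)\cos((L-1)\theta)}{\sin\theta}$. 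Either way, the bound $1/|\sin\theta|$ is finite and independent of $L$. It follows that
\[
\Bigl|\frac1L\sum_{k=0}^{L-1}T_k(\lambda)^2-\frac12\Bigr|\le\frac{1}{2L|\sin\theta|}\to0,
\]
which is exactly the claimed limit.

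The only step needing care is the boundedness of the oscillatory sum. It relies on $z\neq1$, that is, on $\lambda\neq\pm1$, and this is precisely why the lemma is stated on the open interval. It is also worth noting in the write-up that the convergence is not uniform in $\lambda$: the bound $1/(2L|\sin\theta|)$ degrades as $|\lambda|\to1$. This is harmless for the later spectral application, where only finitely many eigenvalues appear. Each one with $|\lambda_i|<1$ contributes an average of $1/2$, and summing over finitely many modes preserves the limit.
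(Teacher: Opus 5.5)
Your proposal is correct and follows essentially the same route as the paper's proof. Both write $\lambda=\cos\theta$ with $\theta\in(0,\pi)$, expand $\cos^2(k\theta)=(1+\cos(2k\theta))/2$, and bound the oscillatory sum uniformly in $L$ via the geometric series in $e^{2\mathrm{i}\theta}\neq 1$; your explicit error bound $1/(2L|\sin\theta|)$ simply makes the paper's ``uniformly bounded'' step quantitative.
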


\begin{proof}[Proof of Lemma \ref{lem:chebyshev-average-energy}]
Write $\lambda=\cos\theta$ for some $\theta\in(0,\pi)$. By Lemma~\ref{lem:chebyshev-bound}, $T_k(\lambda)=\cos(k\theta)$. Using $\cos^2(x)=(1+\cos(2x))/2$ gives
\[
\frac{1}{L}\sum_{k=0}^{L-1}T_k(\lambda)^2
=
\frac{1}{2}
+
\frac{1}{2L}\sum_{k=0}^{L-1}\cos(2k\theta).
\]
It suffices to show the second term goes to zero. Since $\theta\in(0,\pi)$, we have $e^{2\mathrm{i}\theta}\neq 1$, and the geometric-series identity yields
$\sum_{k=0}^{L-1}e^{2\mathrm{i}k\theta}
=(1-e^{2\mathrm{i}L\theta})/(1-e^{2\mathrm{i}\theta})$. Using the inequality $|1-e^{\mathrm{i}x}|\leq 2$, one notices that the magnitude of the sum is uniformly bounded and is independent of $L$. Thus as $L \rightarrow \infty$, $L^{-1}\sum_{k=0}^{L-1}\cos(2k\theta)\rightarrow 0$. 
\end{proof}

Lemma~\ref{lem:chebyshev-average-energy} shows that, for every nonstationary eigenmode $\lambda\in(-1,1)$, the squared Chebyshev response does not decay with information depth when averaged across hops. Instead, its Ces\`aro average converges to $1/2$. This contrasts sharply with adjacency powers, whose response to the same mode decays geometrically as $|\lambda|^k$. The next result generalizes this to the full graph representation:

\vspace{0.5em}

\begin{theorem}[Preservation of spectral information across information depth]
\label{thm:spectral-information-preservation}
Let $\mathbf{A}_{\mathrm{sym}}\in\mathbb{R}^{n\times n}$ be symmetric with eigendecomposition
$\mathbf{A}_{\mathrm{sym}}=\mathbf{Q}\boldsymbol{\Lambda}\mathbf{Q}^{\top}$.
Let $\boldsymbol{\Pi}$ denote the orthogonal projector onto the eigenspace associated with eigenvalue $1$, and suppose that every remaining eigenvalue satisfies $|\lambda_i|<1$. Define $\mathbf{H}_{\perp}=(\mathbf{I}-\boldsymbol{\Pi})\mathbf{H}$ and
$\rho=\max_{\lambda_i\neq 1}|\lambda_i|<1$.

Consider the adjacency-power sequence $\mathbf{V}^{(k)}=\mathbf{A}_{\mathrm{sym}}^k\mathbf{H}$ and the Chebyshev sequence $\mathbf{U}^{(k)}=T_k(\mathbf{A}_{\mathrm{sym}})\mathbf{H}$. Then, for every $k\geq 0$,
\[
\|(\mathbf{I}-\boldsymbol{\Pi})\mathbf{V}^{(k)}\|_F
\leq
\rho^k\|\mathbf{H}_{\perp}\|_F,
\]
and consequently
$\|(\mathbf{I}-\boldsymbol{\Pi})\mathbf{V}^{(k)}\|_F\rightarrow 0$ as $k\rightarrow\infty$ and
\[
\lim_{L\rightarrow\infty}
\frac{1}{L}\sum_{k=0}^{L-1}
\|(\mathbf{I}-\boldsymbol{\Pi})\mathbf{V}^{(k)}\|_F^2
=
0.
\]
In contrast, the Chebyshev sequence satisfies
\[
\lim_{L\rightarrow\infty}
\frac{1}{L}\sum_{k=0}^{L-1}
\|(\mathbf{I}-\boldsymbol{\Pi})\mathbf{U}^{(k)}\|_F^2
=
\frac{1}{2}\|\mathbf{H}_{\perp}\|_F^2.
\]
Thus, while the nonstationary spectral component of adjacency-power propagation vanishes with information depth, the Chebyshev sequence preserves a nonvanishing fraction of this component on average across hops.
\end{theorem}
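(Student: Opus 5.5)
The plan is to diagonalize everything in the orthonormal eigenbasis of $\mathbf{A}_{\mathrm{sym}}$ and reduce both claims to scalar statements about eigenvalues. Write $\mathbf{A}_{\mathrm{sym}}=\sum_i\lambda_i\mathbf{q}_i\mathbf{q}_i^{\top}$, and let $\mathcal{I}_1=\{i:\lambda_i=1\}$, so that $\boldsymbol{\Pi}=\sum_{i\in\mathcal{I}_1}\mathbf{q}_i\mathbf{q}_i^{\top}$ and $\mathbf{I}-\boldsymbol{\Pi}=\sum_{i\notin\mathcal{I}_1}\mathbf{q}_i\mathbf{q}_i^{\top}$. Because $\boldsymbol{\Pi}$ is a spectral projector, it commutes with every polynomial $p(\mathbf{A}_{\mathrm{sym}})$. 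Hence $(\mathbf{I}-\boldsymbol{\Pi})p(\mathbf{A}_{\mathrm{sym}})\mathbf{H}=p(\mathbf{A}_{\mathrm{sym}})\mathbf{H}_{\perp}$. Expanding $\mathbf{H}_{\perp}=\sum_{i\notin\mathcal{I}_1}\mathbf{q}_i\mathbf{c}_i^{\top}$ with $\mathbf{c}_i^{\top}=\mathbf{q}_i^{\top}\mathbf{H}$, orthonormality of the $\mathbf{q}_i$ gives the Parseval identity
\[
\|(\mathbf{I}-\boldsymbol{\Pi})p(\mathbf{A}_{\mathrm{sym}})\mathbf{H}\|_F^2=\sum_{i\notin\mathcal{I}_1}p(\lambda_i)^2\|\mathbf{c}_i\|_2^2,
\qquad
\|\mathbf{H}_{\perp}\|_F^2=\sum_{i\notin\mathcal{I}_1}\|\mathbf{c}_i\|_2^2.
\]
This identity is the single tool used for both sequences.

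For the adjacency-power sequence, take $p(x)=x^k$. Since $|\lambda_i|\le\rho$ for all $i\notin\mathcal{I}_1$, we have $\lambda_i^{2k}\le\rho^{2k}$, and the Parseval identity immediately gives $\|(\mathbf{I}-\boldsymbol{\Pi})\mathbf{V}^{(k)}\|_F\le\rho^k\|\mathbf{H}_{\perp}\|_F$. This tends to $0$ because $\rho<1$. For the Cesàro average, the partial sums of $\rho^{2k}$ are bounded by $1/(1-\rho^2)$ when $\rho<1$; at $\rho=0$ only the $k=0$ term survives. So the average is at most $\|\mathbf{H}_{\perp}\|_F^2/(L(1-\rho^2))\to0$. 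Alternatively, one can cite the fact that Cesàro means of a null sequence are null.

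For the Chebyshev sequence, take $p=T_k$. Averaging the Parseval identity over $k$ gives
\[
\frac1L\sum_{k=0}^{L-1}\|(\mathbf{I}-\boldsymbol{\Pi})\mathbf{U}^{(k)}\|_F^2=\sum_{i\notin\mathcal{I}_1}\|\mathbf{c}_i\|_2^2\,\frac1L\sum_{k=0}^{L-1}T_k(\lambda_i)^2.
\]
This is a finite sum over at most $n$ indices, so the limit can be passed inside term by term. By the hypothesis, every $\lambda_i$ with $i\notin\mathcal{I}_1$ lies in $(-1,1)$, so Lemma~\ref{lem:chebyshev-average-energy} gives each inner average tending to $1/2$. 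The limit is therefore $\tfrac12\sum_{i\notin\mathcal{I}_1}\|\mathbf{c}_i\|^2=\tfrac12\|\mathbf{H}_{\perp}\|_F^2$.

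The main point requiring care is the hypothesis. The lemma needs $\lambda\in(-1,1)$ strictly, and the endpoint $\lambda=-1$ (a bipartite graph) would give $T_k(-1)^2=1$ and a different constant. The statement's assumption $|\lambda_i|<1$ for all non-unit eigenvalues is exactly what excludes this case, so I will invoke it explicitly at that step. Beyond that, the only delicate issue is justifying the interchange of the limit and the sum, which is trivial here because the spectrum is finite.
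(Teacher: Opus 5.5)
Your proposal is correct and follows essentially the same route as the paper. You expand $\mathbf{H}$ in the orthonormal eigenbasis and use orthogonality to reduce both Frobenius norms to weighted sums of $\lambda_i^{2k}$ and $T_k(\lambda_i)^2$; you then bound the geometric series by $1/(1-\rho^2)$ and pass the Ces\`aro limit through the finite sum via Lemma~\ref{lem:chebyshev-average-energy}, noting explicitly that the hypothesis $|\lambda_i|<1$ is what excludes $\lambda=-1$.
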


\begin{proof}[Proof of Theorem \ref{thm:spectral-information-preservation}]
Let $\mathbf{q}_1,\ldots,\mathbf{q}_n$ denote the orthonormal eigenvectors of $\mathbf{A}_{\mathrm{sym}}$, and write
$\mathbf{H}=\sum_{i=1}^{n}\mathbf{q}_i\mathbf{c}_i^{\top}$.
Since $\boldsymbol{\Pi}$ projects onto the eigenspace corresponding to $\lambda_i=1$, we have
$\mathbf{H}_{\perp}=\sum_{\lambda_i\neq 1}\mathbf{q}_i\mathbf{c}_i^{\top}$. For adjacency-power propagation,
\[
(\mathbf{I}-\boldsymbol{\Pi})\mathbf{V}^{(k)}
=
\sum_{\lambda_i\neq 1}
\lambda_i^k\mathbf{q}_i\mathbf{c}_i^{\top}.
\]
Orthogonality of the eigenvectors therefore gives
\[
\|(\mathbf{I}-\boldsymbol{\Pi})\mathbf{V}^{(k)}\|_F^2
=
\sum_{\lambda_i\neq 1}
|\lambda_i|^{2k}\|\mathbf{c}_i\|_2^2
\leq
\rho^{2k}\|\mathbf{H}_{\perp}\|_F^2.
\]
Taking square roots yields the first claim. Moreover,
\[
\frac{1}{L}\sum_{k=0}^{L-1}
\|(\mathbf{I}-\boldsymbol{\Pi})\mathbf{V}^{(k)}\|_F^2
\leq
\frac{\|\mathbf{H}_{\perp}\|_F^2}{L}
\sum_{k=0}^{L-1}\rho^{2k}
\leq
\frac{\|\mathbf{H}_{\perp}\|_F^2}{L(1-\rho^2)},
\]
which converges to zero as $L\rightarrow\infty$. For Chebyshev propagation, the spectral theorem gives $(\mathbf{I}-\boldsymbol{\Pi})\mathbf{U}^{(k)}
=
\sum_{\lambda_i\neq 1}
T_k(\lambda_i)\mathbf{q}_i\mathbf{c}_i^{\top}$,
and hence
$\|(\mathbf{I}-\boldsymbol{\Pi})\mathbf{U}^{(k)}\|_F^2
=
\sum_{\lambda_i\neq 1}T_k(\lambda_i)^2\|\mathbf{c}_i\|_2^2$.
Averaging over $k$ and applying Lemma~\ref{lem:chebyshev-average-energy} to each $\lambda_i\in(-1,1)$ yields
\[
\begin{aligned}
\lim_{L\rightarrow\infty}
\frac{1}{L}\sum_{k=0}^{L-1}
\|(\mathbf{I}-\boldsymbol{\Pi})\mathbf{U}^{(k)}\|_F^2
&=
\sum_{\lambda_i\neq 1}
\left(
\lim_{L\rightarrow\infty}
\frac{1}{L}\sum_{k=0}^{L-1}T_k(\lambda_i)^2
\right)
\|\mathbf{c}_i\|_2^2 \\
&=
\frac{1}{2}
\sum_{\lambda_i\neq 1}\|\mathbf{c}_i\|_2^2 \\
&=
\frac{1}{2}\|\mathbf{H}_{\perp}\|_F^2.
\end{aligned}
\]
This proves the result.
\end{proof}

\newpage

\section{Technical Lemmas}

\begin{lemma}[Chebyshev trigonometric identity]
\label{lem:chebyshev-bound}
Let $\{T_k\}_{k\geq 0}$ denote the Chebyshev polynomials of the first kind, defined by
$T_0(x)=1$, $T_1(x)=x$, and $T_k(x)=2xT_{k-1}(x)-T_{k-2}(x)$ for $k\geq 2$.
Then, for every $\theta\in\mathbb{R}$ and $k\geq 0$,
$T_k(\cos\theta)=\cos(k\theta)$. Consequently,
$\sup_{x\in[-1,1]}|T_k(x)|\leq 1$ for every $k\geq 0$.
\end{lemma}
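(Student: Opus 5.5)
The plan is to prove the trigonometric identity $T_k(\cos\theta)=\cos(k\theta)$ by strong induction on $k$, using the three-term recurrence that defines $\{T_k\}$, and then read off the uniform bound on $[-1,1]$ from the fact that $\cos$ maps $\mathbb{R}$ onto $[-1,1]$.

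\textbf{Base cases.} For $k=0$ we have $T_0(\cos\theta)=1=\cos(0\cdot\theta)$. For $k=1$ we have $T_1(\cos\theta)=\cos\theta$. Both hold for every $\theta\in\mathbb{R}$.

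\textbf{Inductive step.} Suppose $T_{k-1}(\cos\theta)=\cos((k-1)\theta)$ and $T_{k-2}(\cos\theta)=\cos((k-2)\theta)$ for all $\theta$, with $k\geq 2$. The key step is the sum-to-product identity
\[
\cos(k\theta)+\cos((k-2)\theta)=2\cos\theta\,\cos((k-1)\theta).
\]
It follows from adding the angle-addition formulas for $\cos((k-1)\theta\pm\theta)$; the sine terms cancel. Substituting $x=\cos\theta$ into the recurrence and applying the induction hypothesis gives
\[
T_k(\cos\theta)=2\cos\theta\,\cos((k-1)\theta)-\cos((k-2)\theta)=\cos(k\theta),
\]
which closes the induction.

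\textbf{Uniform bound.} Let $x\in[-1,1]$. Choose $\theta=\arccos x\in[0,\pi]$, so that $x=\cos\theta$. Then $|T_k(x)|=|\cos(k\theta)|\leq 1$. Taking the supremum over $x\in[-1,1]$ gives $\sup_{x\in[-1,1]}|T_k(x)|\leq 1$. The bound is in fact attained, at $x=1$, though only the inequality is needed.

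There is no real obstacle here. The only point needing care is that the induction uses the two previous indices, so it must be a two-step (strong) induction with both base cases $k=0,1$ verified. The product-to-sum identity is the single nontrivial ingredient.
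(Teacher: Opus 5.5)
Your proof is correct and matches the paper's argument essentially step for step. Like the paper, you use a two-step induction from the base cases $k=0,1$ via $2\cos\theta\cos((k-1)\theta)=\cos(k\theta)+\cos((k-2)\theta)$, and then obtain the uniform bound by writing $x=\cos\theta$ with $\theta\in[0,\pi]$.
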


\begin{proof}[Proof of Lemma \ref{lem:chebyshev-bound}]
The identity is standard for Chebyshev polynomials of the first kind.  We include the argument for completeness.
The result is immediate for $k=0$ and $k=1$. Suppose it holds for $k-1$ and $k-2$. Then, using the Chebyshev recurrence and the identity
$2\cos(\theta)\cos((k-1)\theta)
=\cos(k\theta)+\cos((k-2)\theta)$, we obtain
\[
\begin{aligned}
T_k(\cos\theta)
&=
2\cos\theta\,T_{k-1}(\cos\theta)
-
T_{k-2}(\cos\theta) \\
&=
2\cos\theta\cos((k-1)\theta)
-
\cos((k-2)\theta) \\
&=
\cos(k\theta).
\end{aligned}
\]
The claim follows by induction. Finally, every $x\in[-1,1]$ can be written as
$x=\cos\theta$ for some $\theta\in[0,\pi]$, and hence
$|T_k(x)|=|\cos(k\theta)|\leq 1$.
\end{proof}

\end{document}